\newcommand\E{\mathbb{E}}
\newcommand{\PP}{\mathbb{P}}
\newcommand\V{\text{Var}}
\newtheorem{lemma}{Lemma}
\newtheorem{prop}{Proposition}
\keywords{Causal Inference, Resource Allocation, Public Services}
\title[Learning treatment effects while treating those in need]{Learning treatment effects while treating those in need}
\author{Bryan Wilder}
\email{bwilder@cmu.edu}
\affiliation{%
  \institution{Carnegie Mellon University}
  \city{Pittsburgh}
  \state{PA}
  \country{USA}
}
\author{Pim Welle}
\email{Paul.Welle@alleghenycounty.us}
\affiliation{%
  \institution{Allegheny County Department of Human Services}
  \city{Pittsburgh}
  \state{PA}
  \country{USA}
}
\begin{abstract}
Many social programs attempt to allocate scarce resources to people with the greatest need. Indeed, public services increasingly use algorithmic risk assessments motivated by this goal. However, targeting the highest-need recipients often conflicts with attempting to evaluate the causal effect of the program as a whole, as the best evaluations would be obtained by randomizing the allocation. We propose a framework to design randomized allocation rules which optimally balance targeting high-need individuals with learning treatment effects, presenting policymakers with a Pareto frontier between the two goals. We give sample complexity guarantees for the policy learning problem and provide a computationally efficient strategy to implement it. We then collaborate with the human services department of Allegheny County, Pennsylvania to evaluate our methods on data from real service delivery settings. Optimized policies can substantially mitigate the tradeoff between learning and targeting. For example, it is often possible to obtain 90\% of the optimal utility in targeting high-need individuals while ensuring that the average treatment effect can be estimated with less than 2 times the samples that a randomized controlled trial would require. Mechanisms for targeting public services often focus on measuring need as accurately as possible. However, our results suggest that algorithmic systems in public services can be most impactful if they incorporate program evaluation as an explicit goal alongside targeting. 
\end{abstract}
\begin{document}

\maketitle

\section{Introduction}

A recurring challenge across many policy settings is the allocation of a limited resource under uncertainty about its benefit. Consider a policymaker who wishes to focus a limited budget for the largest overall effect. On the one hand, if the policymaker knew exactly how much each individual benefited from a given intervention, they could scale the most effective programs and target each towards the individuals who would benefit the most.  On the other hand, if there was complete uncertainty about benefits, decisions could be no better than random. Most real settings exist somewhere in the middle: policymakers believe that they can identify some individuals who are plausibly better candidates for an intervention but have little formal evidence about causal effects. Indeed, even the \textit{average} treatment effect is often not precisely known (much less heterogeneous effects), making it difficult to identify and scale the best interventions. Without precise knowledge of causal effects, resources in any given program are often allocated preferentially to individuals whose observable characteristics are thought to indicate greater need; see, e.g.\, the common use of vulnerability assessments in the allocation of housing assistance \citep{petry2021associations,shinn2022allocating} or proxy means tests in development settings \citep{grosh1995proxy,alatas2012targeting,diamond2016estimating}. It is increasingly common to measure need using predictions from machine learning models \citep{vaithianathan2020using,aiken2022machine,pan2017machine,toros2017prioritizing}. Unlike causal quantities, such models can often be estimated using existing historical data because they predict the ``baseline" risk of an adverse outcome without treatment (while the treatment effect is the difference in outcomes with and without treatment). This offers the potential to target interventions towards more vulnerable individuals but it also poses a dilemma: obtaining strongly credible evidence about treatment effects would require randomized allocations that sometimes deny treatment to individuals thought to have higher need. However, estimating even average effects would provide an often-lacking chance to identify which programs are truly effective at improving outcomes.

Here we study the question: how can policymakers optimally navigate such tradeoffs, balancing the goal of offering interventions to people who need them in the present while also gathering evidence to improve services over time? Currently, policymakers largely choose between one of two extremes. In regular practice, services are often given exclusively to those individuals deemed at highest need, an allocation rule we refer to as need-based targeting. At the opposite extreme, policymakers occasionally carve out specific settings to run a randomized controlled trial (RCT), ignoring the targeting goal entirely but enabling credible estimates of treatment effects. One reason RCTs are uncommon is that policymakers may be justifiably averse to denying a potentially beneficial intervention to high-risk individuals, even if doing so provides evidence for the future. 

We propose an optimal experimental design framework that traces out the spectrum between these extremes. In the case where a policymaker prefers either the pure randomization of an RCT or the pure needs assessment of a risk model they can still pursue those strategies, but with this framework they can operate anywhere in between. Our methods learn a class of of assignment rules that map an individual's observable features to their probability of receiving an intervention. These rules are the solution to a family of optimization problems that balance (1) the precision of the estimated average treatment effect with (2) how well allocations align with a specified measure of individual need. Our framework also allows the policymaker to easily impose additional constraints, for example on equity in allocation across different subgroups. The result is a Pareto frontier of experimental designs that optimally span the tradeoff between such competing goals.
 
We first discuss related work and then formally introduce our optimal experimental design framework. We provide a computationally lightweight approach to learn optimal assignment policies and give statistical guarantees for its finite-sample performance. Our policies can be implemented as a simple post-processing step on top of any existing means of scoring individual need. We then collaborate with the human services department of  Allegheny County, Pennsylvania to empirically evaluate our proposed methods on data from real-world service delivery settings. We analyze the tradeoff between targeting and learning in this setting, and find that, under optimized designs, the tradeoff between these two goals is often quite favorable. For examples, designs at the Pareto frontier computed by our method are often able to provide 90\% of the best possible performance in treating high-need individuals, while enabling randomization-based, gold-standard estimates of the average treatment effect at sample sizes within a factor of two of what a RCT would require. There is no way completely avoid difficult choices between targeting and learning, and the right balance to strike will inevitably be sensitive to context and the needs of particular communities. However, our results show that carefully chosen allocation rules can substantially mitigate such conflicts, offering one route towards evidence-based improvement of critical services.

\section{Related work}
Our results connect to recent interest in the difference between targeting interventions according to baseline risk (in potential outcome terms, an estimate of $Y(0)$) versus treatment effects (an estimate of $Y(1) - Y(0)$) \citep{athey2023machine,inoue2023machine}. Some have argued that targeting by risk is difficult to rationalize in welfare terms because high-risk individuals may not realize the greatest benefit \citep{haushofer2022targeting}, while others have illustrated settings where targeting based on risk selects a population with a higher ``ceiling" for potential impact \citep{heller2022machine}. Our goal in this paper is \textit{not} to take a stance about what preferences policymakers ought to have with respect to this question. There are a variety of reasons policymakers may prefer either strategy, especially in the setting we consider where high-quality estimates of causal effects are not available at the outset. In some domains, beliefs about which individuals are likely to benefit from a service may already be baked into eligibility criteria or risk assessments. In others, policymakers may see intrinsic value in offering assistance to individuals with high need, even if they do not benefit as much as others. Our framework is agnostic as to the content of these preferences; we take them as encapsulated in a fixed \textit{targeting utility function} and our goal is simply to enable treatment effect estimation while sacrificing as little of this pre-specified utility as possible. While one empirical motivation for our work is the widespread adoption of algorithmic risk assessments, our methods can be equally well applied to preferences informed by treatment effect estimates if these are available. 

We can draw a similar distinction between our goals and those of the online learning paradigm. For example, one line of work models the assignment of individuals to interventions as a contextual bandit \citep{agarwal2014taming,foster2020beyond,dimakopoulou2019balanced}. However, such methods have seen relatively few uses in real-world policy settings. The motivation for our problem formulation is that policymakers may have multiple incommensurable goals, in which case the single long-term regret objective optimized in online learning may fail to capture their preferences. Such objectives can unfold on several levels. First, as discussed above, offering assistance to individuals with high need is often considered worthwhile in itself, even if those individuals do not benefit as much from treatment (and hence contribute less to the standard objective for a contextual bandit). Second, treatment effect estimates inform multiple kinds of decisions: both who to offer a given intervention to, but also which programs to offer in the first place. Indeed, identifying and scaling the best-performing interventions (via an estimate of the ATE) may often be more impactful than improving the targeting of individual interventions \citep{perdomo2024relative,liu2023reimagining}. Finally, policymakers may be unwilling to tolerate the significant amount of randomized exploration that online learning methods often require. For the small sample sizes common in social service settings, such exploration may impose costs qualitatively similar to a RCT. For all of these reasons, we formulate an explicitly multi-objective problem which presents the policymaker with a Pareto frontier between targeting individuals in need and estimating treatment effects (along with any other goals such as fairness). Perhaps closest to our work is that of \citet{henderson2023integrating}, who propose a bandit framework that integrates both exploitation and statistical estimation as goals. However, their method still requires online exploration and sequential updates. In comparison, our framework is best suited for the case where policymakers wish to learn about treatment effects while \textit{always} ensuring a specific level of performance in targeting high-need individuals, instead of minimizing asymptotic regret as in online learning. This presentation of a Pareto curve between multiple objectives can be seen in the spirit of \citet{rolf2020balancing}'s proposal for machine learning to explicitly balance objectives such as profit, welfare, and fairness. Operationally, our method is also completely non-adaptive and easily implemented as a post-processing step to any existing risk model. This helps circumvent common challenges in implementing online learning such as delayed outcomes \citep{joulani2013online} and inability to make frequent updates \citep{perchet2016batched}.

Finally, our work is related to other efforts to design experiments that satisfy other desirable properties. For example, \citet{narita2021incorporating} and \citet{chassang2012selective} develop frameworks which offer individuals customized allocation probabilities in order to allow incentive-compatible elicitation of their private value for being in the treatment arm. However, they do not explicitly optimize the tradeoff between utility and statistical power as in our framework. \citet{owen2020optimizing} study exploration-exploitation tradeoffs in tie-breaker designs which uniformly randomize individuals within some distance of the decision boundary for an intervention. However, they do not consider the design of optimal assignment probabilities as a function of individuals' covariates, or provide the ability to impose constraints such as fairness over such policies.  

% Our approach can also be seen as part of a line of work which explicitly frames experimental design as an optimization problem \citep{bertsimas2015power,kasy2016experimenters,banerjee2020theory,kallus2018optimal,morgan2012rerandomization}. For example, \citet{harshaw2023balancing} seek designs with greatest balance subject to a constraint on worst-case error in treatment effect estimates. However, the motivation for our formulation is distinct from that in previous work: we model a policymaker who seeks to control the costliness of an experiment in terms of its deviation from treating higher-need individuals. 

\section{Methods}

Consider the problem of a policymaker who would like to determine a scheme for allocating a limited quantity of a treatment or intervention, while balancing multiple different goals. Specifically, they would like to both reach individuals who are greatest need according to some pre-specified metric while simultaneously learning about the average treatment effect that the intervention offers. The policymaker chooses a function $p(X): \mathcal{X} \to [0,1]$ which gives the probability of assigning each individual treatment based on their features $X \in \mathcal{X}$. Treatment is assigned to each individual independently with this probability, after which their corresponding outcomes are observed. Finally, the decision maker estimates the average treatment effect on some target population.  Let $Y(0), Y(1)$ denote potential outcomes absent and with treatment, respectively. We assume that $X, Y(0), Y(1) \sim \PP$ iid for some joint distribution $\PP$. Let $\tau = \E[Y(1) - Y(0)]$ denote the ATE and $\tau(X) = \E[Y(1) - Y(0)|X]$ denote the conditional average treatment effect (CATE). We start by formalizing objective functions that quantify how well a policy enables estimation of the ATE. Then, we introduce our constrained optimization framework to trade off estimation efficiency with other social desiderata and finally provide efficient algorithms and finite-sample guarantees to solve the resulting policy optimization problem.   

\subsection{Formalization of objectives for causal estimation}
We start by assuming that the decision maker wishes to estimate the ATE with respect to the distribution $\PP$, and later consider the case where they wish to estimate the ATE on specific subpopulation. Let $u(X)$ denote a \textit{targeting utility} function chosen by the decision maker which maps an individual's observed covariates to the utility of offering treatment to that individual. In our running example, $u$ may be an estimate of the probability of an adverse outcome absent treatment (i.e., $u(X) = \widehat{E}[Y(0)|X]$), but our methods are agnostic as to how $u$ is chosen. We assume that $u$ is normalized so that $u(X) \in [0,1]$ with probability 1. 

To formalize our objective, we start by recalling the efficiency bound for the variance of the ATE estimate \citep{hahn1998role,imbens2009recent}, given by 
\begin{align*}
   \mathbb{V}_{\text{ATE}} = \E\left[\frac{\V(Y(1)|X)}{p(X)} + \frac{\V(Y(0)|X)}{1- p(X)}  + (\tau(X) - \tau)^2\right].
\end{align*}
The efficiency bound quantifies the lower possible variance for estimating the ATE. Since in experimental settings we will be able to use unbiased estimators, this is also equivalent to the best possible mean-squared error. By focusing on the efficiency bound, our formulation is agnostic to the exact choice of estimator. Since under appropriate additional regularity conditions, various estimators achieve the efficiency bound (e.g., the augmented inverse-propensity weighted or doubly-robust estimator), our results can also be seen as targeting the variance achieved by a properly chosen estimator. 

We seek assignment probabilities $p(X)$ that minimize this estimation error subject to a constraint on utility. However, the outcome variance terms in the numerator typically cannot be estimated without observations of the potential outcomes, i.e., before running the trial.  This dilemma is closely related to the question of the \textit{Neyman variance} \citep{neyman1992two} in experimental design: RCTs that randomize equally between treatment and control groups are optimal (have smallest variance) when the two outcomes have equal variance, while if the variance is higher in one arm it is optimal to allocate a greater fraction of subjects to that arm. There is a recent line of work that attempts to adapt assignment probabilities over the course of the RCT based on intermediate estimates of the variances \citep{dai2024clip,zhao2023adaptive}. However, in this work we focus on developing fixed, non-adaptive designs due to their much greater implementability in service delivery settings (see discussion in related work). 

In common practice, most RCTs simply use complete randomization, assigning with probability 0.5 to both treatment and control. This can be seen as optimizing for the case where the variances are equal, $v_0(X) = v_1(X) = C \,\,\,\forall X$. In our framework, we propose to accommodate different kinds of prior information that may be available about $v_0$ and $v_1$ by selecting an uncertainty set $\Sigma$ for the variance functions and optimizing relative to the worst-case scenario within that set. That is, let
\begin{align*}
    \mathbb{V}_{\text{ATE}}(v_0, v_1, p) = \E\left[\frac{v_1(X)}{p(X)} + \frac{v_0(X)}{1- p(X)}  + (\tau(X) - \tau)^2\right]
\end{align*}
denote the efficiency bound as a function of the conditional variance functions $v_1$ and $v_0$. Then, we seek an assignment policy within some constrained set $\mathcal{P}$ which solves the minmax problem
\begin{align*}
    \min_{p \in \mathcal{P}} \max_{v_0, v_1 \in \Sigma}  \mathbb{V}_{\text{ATE}}(v_0, v_1, p).
\end{align*}
The properties of the resulting solution will clearly be determined by the choice $\Sigma$. We emphasize though that the resulting policies will permit unbiased estimates of treatment effects for \textit{any} choice of $\Sigma$ that the analyst makes, regardless of how misspecified it turns out to be. Imposing correctly specified structure on $\Sigma$ may simply allow more efficient estimation, analogous to the specification of a working variance model in statistical tasks. We next propose possible instantiations for $\Sigma$ depending on the amount of a-prior knowledge available about the variances. 
\paragraph{Example 1} In the case where the analyst is completely agnostic as to the structure of the variances, we propose to optimize under the assumption only that the variances are bounded, i.e., there is some constant $C$ such that $v_0(X) \leq C$ and $v_1(X) \leq C$ holds for all $X$. This corresponds to the set $\Sigma_{\infty, C} = \{v_0, v_1: ||v_0||_\infty \leq C, ||v_1||_\infty \leq C\}$. However, the worst-case scenario within this set is to set both $v_1$ and $v_0$ to the maximum possible value for every $X$. As a simple consequence, we obtain that 
\begin{prop}
    Let $p^*$ be an optimal solution of the problem $\min_{p \in \mathcal{P}} \E\left[\frac{1}{p(X)} + \frac{1}{1 - p(X)}\right]$. Then, $p^*$ is also an optimal solution to the problem  $\min_{p \in \mathcal{P}} \max_{v_0, v_1 \in \Sigma_{\infty, C}}  \mathbb{V}_{\text{ATE}}(v_0, v_1, p)$ for any $C \geq 0$. \label{prop:equal-minmax}
\end{prop}
Effectively, Proposition \ref{prop:equal-minmax} says that if we do not impose any further assumptions on the variance, the minmax solution is to treat the variances in the two groups as equal. If $\mathcal{P}$ were entirely unconstrained, the minmax policy would simply recover the standard RCT where $p(X) = 0.5$ for all $X$; the focus of our framework in this case will be on how constraints on the expected targeting utility of the policy cause deviations from the typical equal-probability design. We suggest this uncertainty set as a good, simple default unless strong prior knowledge about heteroskedasticity is available. We will also show later that adopting this uncertainty set empirically leads to performance almost as strong as optimizing with complete knowledge of the variance structure.  

\paragraph{Example 2} In some cases, administrative data is available from before a new intervention is deployed. In these cases, we may be able to estimate $\V[Y(0)|X]$ as some $\hat{v}_0(X)$ while $\V[Y(1)|X]$ is unknown. A natural uncertainty set is to constrain how far $\V[Y(1)|X]$ can deviate from $\hat{v}_0(X)$. We will work with flexible specifications where assume that $\V[Y(1)|X] \leq a(X) \cdot \hat{v}_0(X)$ for some function $a(X) \geq 0$, leading to the uncertainty set 
\begin{align*}
    \Sigma_a = \{\hat{v}_0, v_1: v_1(X) \leq a(X) \cdot \hat{v}_0(X)\}.
\end{align*} 
A more conservative variation where $v_0$ is constrained to lie below an upper confidence bound for $\V[Y(0)|X]$ instead of being set at the point estimate is also possible and results in similar structure. The question is how to choose the function $a(X)$. Without any further knowledge, $a$ might simply be set to a constant reflecting the desired degree of robustness to large variance in the treatment arm. However, when outcomes are binary (as for all of the examples in our motivating application of human service delivery), more informed choices are possible. For binary variables, we are guaranteed that $\V[Y(1)|X] \leq \frac{1}{4}$ even in the worst case where $\Pr(Y(1)|X) = 0.5$, so setting $a(X) = \frac{1}{4\hat{v}_0(X)}$ guarantees that $\Sigma_a$ will contain the true heteroskedasticity structure. Under some circumstances, we may even be able to sharpen the bounds further. For example, suppose that we are willing to optimize under the assumption that the intervention never makes outcomes worse in expectation, i.e., $\tau(X) \leq 0$ for all $X$. Then, for any $X$ where $\Pr(Y(0) = 1|X) \leq 0.5$, we are guaranteed that $\Pr\left(Y(1) = 1|X\right) \leq \Pr\left(Y(0) = 1|X\right)$ and so $\V[Y(1)|X] \leq \V[Y(0)|X]$ and we can set $a(X) = 1$. This case occurs frequently in our application domains because many outcomes of interest (e.g., mortality, homelessness, or jail entry) occur less than half the time. 

Regardless of the choice of $a$, we find that the minmax problem can be simplified in a desirable fashion reminiscent of the first model studied:
\begin{prop}
    Let $p^*$ be an optimal solution of the problem $\min_{p \in \mathcal{P}} \E\left[\frac{a(X) \hat{v}_0(X)}{p(X)} + \frac{\hat{v}_0(X)}{1 - p(X)}\right]$. Then, $p^*$ is also an optimal solution to the problem  $\min_{p \in \mathcal{P}} \max_{v_0, v_1 \in \Sigma_{a}}  \mathbb{V}_{\text{ATE}}(v_0, v_1, p)$ for any choice of $a : \mathcal{X} \to R^+$. \label{prop:estimated-minmax}
\end{prop}
This structure arises because the worst case scenario again places both variance terms at their pointwise upper bounds. 

\subsection{Constrained optimization problem for assignment policies}
We study a family of policy optimization problems that subsumes both of the above models. The outcome variance model is accounted for by fixing two functions $a_0, a_1: \mathcal{X} \to R^+$ that will weight the control and treatment terms in the efficiency bound, respectively. E.g., Example 1 is recovered by setting $a_0(X) = a_1(X) = 1$ and Example 2 by setting $a_0(X) = \hat{v}_0(X)$ and $a_1 = a(X)\hat{v}_0(X)$. We aim to find allocation probabilities which solve an optimization problem of the form:
\begin{align}
    &\min_p \E\left[\frac{a_1(X)}{p(X)} + \frac{a_0(X)}{1-p(X)}\right] \nonumber \\
    &\E[p(X)u(X)] \geq c \label{problem:main}\\
    &\E[p(X)] \leq b \nonumber\\
    &p(X) \in [\gamma,1-\gamma] \quad \forall X \in \mathcal{X}. \nonumber
\end{align}
The objective is to minimize the worst-case variance of the ATE estimate relative to the chosen uncertainty set (as instantiated in $a_0$ and $a_1$). The first constraint imposes that the assignment rule has expected utility at least $c$ with respect to $u$. The second constraint enforces budget feasibility, that at most a fraction $b$ of individuals are offered treatment. Finally, we restrict the assignment probabilities to the interval $ [\gamma,1-\gamma]$ for a small constant $\gamma$ chosen by the user since the objective becomes undefined at $p(X) = 0$ or 1. In practice, we find that the constraints involving $\gamma$ are not typically binding because the objective function rapidly increases as $p(X)$ approaches 0 or 1, rendering boundary solutions suboptimal.

This core formulation can also be extended with additional user-specified constraints. For example, we may wish to impose a constraint on equity in resource allocation across different population groups. Suppose that we specify two subgroups $\mathcal{G}_0, \mathcal{G}_1 \subseteq X$. Natural constraints could be that the groups have similar expected utility, 
\begin{align}
    |\E[p(X)u(X)|X \in \mathcal{G}_0] - \E[p(X)u(X)|X \in \mathcal{G}_1]| \leq \epsilon \label{eq:fairness-constraint}
\end{align}
or are offered similar treatment probabilities, 
\begin{align}
    |\E[p(X)|X \in \mathcal{G}_0] - \E[p(X)|X \in \mathcal{G}_1]| \leq \epsilon. 
\end{align}
In general, our technical framework can accommodate any such constraints so long as the resulting feasible set is convex in $p$ (as is the case for these two examples). In what follows, we let the optimization problem have constraints of the form
\begin{align*}
    \E[g_j(p(X_i), X_i)] \leq c_j \quad j = 1...J
\end{align*}
where $J$ is the total number of constraints. We assume that each function is normalized so that $g_j(p(X_i), X_i) \in [0, 1]$ with probability 1 and $g_j(p(X_i), X_i)$ is 1-Lipschitz with respect to $p(X_i)$. Both conditions are easily enforced for all of the example constraints above.

% The assignment probabilities must also respect a set of user-specified constraints $\{g_j\}_{j = 1}^J$.  We focus on two particular examples of such constraints:

% First, a constraint on the estimated \textit{recall} (or sensitivity) with which the policy allocates interventions to individuals with $Z = 1$ (i.e., an adverse outcome absent treatment). In order to guarantee sensitivity at least $c_j$, we require that $\E[p(X)|Z = 1] \geq c_j$, or equivalently $\E[p(X)E[Z|X]] \geq \frac{c_j}{\E[Z]}$. Using $\mu(X)$ as a plugin estimate of $E[Z|X]$, we obtain the final constraint
% \begin{align*}
%     \E[p(X)\mu(X)] \geq \frac{c_j}{\E[Z]}.
% \end{align*}

% Second, a constraint on equity in resource allocation. While there are many possible ways of formalizing equity appropriate to different settings, for concreteness we focus on a constraint in \textit{recall parity}, sometimes referred to as equality of opportunity. Roughly, this requires for two specified subgroups $\mathcal{G}_0, \mathcal{G}_1 \subseteq X$, the probability of receiving an intervention conditional on $Z = 1$ is similar between the groups. Formally, this is 
% \begin{align*}
%     |\E[p(X)|X \in \mathcal{G}_0, Z = 1] - \E[p(X)|X \in \mathcal{G}_0, Z = 1]| \leq \epsilon
% \end{align*}
% which can then be rewritten as two linear inequality constraints using the plugin estimator $\mu(X)$ similarly to above. 

There are two main difficulties in solving the above optimization problem. First, it is over a functional decision variable $p$ (a mapping from covariates to assignment probabilities). Second, the objective and constraints are in terms of expectations over the data generating distribution, while in practice we will only have access to finite samples. We next introduce efficient algorithms to compute near-optimal policies with finite-sample guarantees.

\subsection{Computing optimal policies}
Our aim is to find an assignment probability function $p$ that solves the above optimization problem, given a sample  $X_1...X_n \sim \PP$. We emphasize that our approach only requires unlabeled data to inform the covariate distribution. The high-level idea is to use this sample to estimate the optimal value of the dual parameter for each constraint. As each individual arrives, we can then solve a \textit{separate} optimization problem to compute their assignment probability, balancing between the need to adhere to the constraints (as represented by the dual variables) versus the variance that this individual would contribute to the ATE estimate. For example, an individual for whom $u(X)$ is large would have a larger dual term encouraging assignment to treatment (in order to boost utility), which can be weighed against the variance contributed by large assignment probabilities. To formalize this idea, we take the dual of the population optimization problem to obtain
\begin{align*}
    \max_{\lambda \geq 0} \min_{p \in [\gamma,1-\gamma]^\mathcal{X}} \E\left[\frac{a_1(X)}{p(X)} + \frac{a_0(X)}{1-p(X)} + \sum_{j = 1}^J \lambda_j(g( p(X), X) - c_j)\right] 
\end{align*}
where $\lambda_j$ is the dual variable associated constraint $j$. The dual has the attractive property that the inner objective is separable across $X$, allowing us to push the min inside the expectation:
\begin{align*}
    \max_{\lambda \geq 0}  \E\left[\min_{p(X) \in [\gamma,1-\gamma]} \frac{a_1(X)}{p(X)} + \frac{a_0(X)}{1-p(X)}  + \sum_{j = 1}^J \lambda_j(g(p(X), X) - c_j)\right].
\end{align*}
Our strategy will be to produce estimates $\widehat{\lambda}$ of the the optimal dual parameters  on the training set by solving the \textit{sample} problem
\begin{align}
    \max_{\lambda \geq 0}  \frac{1}{n}\sum_{i = 1}^n \left[\min_{p(X_i) \in [\gamma,1-\gamma]} \frac{a_1(X_i)}{p(X_i)} + \frac{a_0(X_i)}{1-p(X_i)}  + \sum_{j = 1}^J \lambda_j(g(p(X_i), X_i) - c_j)\right]. \label{problem-sample}
\end{align}
This is a strongly convex optimization problem in $n$ variables that can easily be solved using standard methods. Then, at test time, we compute $p(X)$ for each incoming individual $X$ by solving the inner minimization problem at the optimal dual parameters, i.e., we compute 
\begin{align}
    \widehat{p}(X) = \text{argmin}_{p(X) \in [\gamma,1-\gamma]} \frac{a_1(X)}{p(X)} + \frac{a_0(X)}{1-p(X)}  + \sum_{j = 1}^J \widehat{\lambda}_j(g(p(X), X) - c_j) \label{eq-optimal-p}
\end{align}
separately for each individual and randomize them to be treated with probability $\widehat{p}(X)$. Effectively, the dual parameters tell the test-time algorithm how much to weight the constraints (e.g., utility and budget considerations) compared to variance.  One potentially desirable property of this approach is that it does not require any joint computations over the entire cohort of individuals who are candidates for treatment; the allocations can be computed and sampled entirely separately. This may be  necessary in settings where individuals arrive in a rolling fashion, as in e.g.\ most operational social or health services.

\subsection{Finite-sample guarantees}

We now turn to establishing the number of samples of $X$ that are required to ensure that we obtain a policy which is both close to satisfying the constraints, and near-optimal in terms of variance, with high probability.  The main idea is that optimal policies for this problem are parameterized just by the values of the dual variables, so we will obtain a near-optimal policy if the value of the Lagrangian is approximated well across the entire set of possible values that the duals could take. We first present a generic sample complexity bound for any set of convex constraints which quantifies the dependence on the duals through two quantities: a high-probability bound on the maximum value that the duals can take, and the minimum value of the variance proxies $a_0$ and $a_1$ (which influences the smoothness of the relationship between the duals and the resulting primal solution). We will then instantiate this result for linear constraints structures and interpret and/or further bound these quantities for constraints like the ones proposed above. We start by stating the generic result, which gives conditions under which our approach produces policies within $\epsilon$ of optimality and feasibility.

% Let $\kappa = \sup_{X_1, X_2 \in \mathcal{X}}\mu(X_1) - \mu(X_2)$.  $\kappa$ plays the role of quantifying how well-conditioned the constraints are: if all of the values of $\mu$ are very close together, the constraints are nearly colinear and it becomes harder to control the values of the dual variables. 

% We can now state the main sample complexity result:
\begin{prop}
    Given $n$ iid samples of $X$ from $\PP$, let the solution to the sample dual optimization Problem \ref{problem-sample} be $\widehat{\lambda}$. Let $\widehat{p}$ be the policy obtained by solving the associated Lagrangian for any given $X$ and $p^*$ denote the optimal solution to the population problem. Suppose that for some $d > 0$, $||\widehat{\lambda}||_\infty \leq d$ with probability at least $1 - \delta_1$. Suppose also that there exist constants $a_{\text{min}}, a_{\text{max}} \geq 0$ such that $a_{\text{min}}\leq a_0(X), a_1(X) \leq a_{\text{max}}$ for all $X$. Fix any $\epsilon, \delta_2 > 0$. In order to guarantee that with probability at least $1 - \delta_1 - \delta_2$, 
    \begin{align*}
        &\E\left[\frac{a_1(X)}{\widehat{p}(X)} + \frac{a_0(X)}{1-  \widehat{p}(X)}\right] \leq \E\left[\frac{a_1(X)}{p^*(X)} + \frac{a_0(X)}{1-  p^*(X)}\right] + \epsilon\\
        &\E[g_j(\widehat{p}(X), X)] \leq c_j + \epsilon \quad j = 1...J
        % &\E[\widehat{p}(X) \mu(X)] \geq c -  \epsilon\\
        % &\E[\widehat{p}(X)] \leq b + \epsilon,
    \end{align*} 
    it suffices to have $n = O\left(\frac{J^3d^2 }{\epsilon^2 a_{\text{min}}^2}\log\frac{Jd}{\epsilon\delta_2} \right)$ samples. \label{prop-sample-complexity}
\end{prop}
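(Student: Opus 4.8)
Write $f(p) = \frac1p + \frac1{1-p}$, $\ell(p,X,\lambda) = f(p) + \sum_{j=1}^J \lambda_j(g_j(p,X)-c_j)$, and $p_\lambda(X) = \text{argmin}_{p\in[\gamma,1-\gamma]}\ell(p,X,\lambda)$, so that the deployed policy is $\widehat p = p_{\widehat\lambda}$. Let $\widehat D(\lambda) = \frac1n\sum_{i=1}^n \min_p \ell(p,X_i,\lambda)$ be the empirical dual objective of \eqref{problem-sample}, let $D(\lambda) = \E[\min_p \ell(p,X,\lambda)]$ be its population version, and let $\Lambda = \{\lambda \ge 0 : \|\lambda\|_\infty \le d\}$. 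The proof rests on two ingredients: an exact first-order characterization of the empirical dual maximizer $\widehat\lambda$, and one-sided uniform convergence of $\widehat D$ and of two empirical averages associated with $\widehat p$, all over $\Lambda$.

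First I would record structural facts about the inner problem. Since $f$ is strongly convex on $[\gamma,1-\gamma]$ (its second derivative is bounded below by a universal constant, independent of $\gamma$) and each $g_j(\cdot,X)$ is convex and $1$-Lipschitz, $\ell(\cdot,X,\lambda)$ is strongly convex, so $p_\lambda(X)$ is unique; the standard strong-convexity perturbation bound gives $|p_\lambda(X)-p_{\lambda'}(X)| = O(\|\lambda-\lambda'\|_1)$ uniformly in $X$, and Danskin's theorem gives that $\widehat D$ is differentiable with $\nabla_j\widehat D(\lambda) = \frac1n\sum_i(g_j(p_\lambda(X_i),X_i)-c_j)$. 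Because $\widehat\lambda$ maximizes the concave $\widehat D$ over the nonnegative orthant, the first-order optimality conditions (no constraint qualification needed) are $\nabla_j\widehat D(\widehat\lambda)\le0$ and $\widehat\lambda_j\nabla_j\widehat D(\widehat\lambda)=0$ for all $j$. The first is exactly empirical feasibility of the deployed policy, $\frac1n\sum_i g_j(\widehat p(X_i),X_i) \le c_j$; substituting the second into the identity $\widehat D(\widehat\lambda) = \frac1n\sum_i f(\widehat p(X_i)) + \sum_j\widehat\lambda_j\big(\frac1n\sum_i g_j(\widehat p(X_i),X_i)-c_j\big)$ collapses it to $\widehat D(\widehat\lambda) = \frac1n\sum_i f(\widehat p(X_i))$.

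Second is the uniform-convergence step, which is the main technical work. I would cover $[0,d]^J$ with an $\eta$-net in $\|\cdot\|_\infty$ of size $(3d/\eta)^J$, apply Hoeffding's inequality at each net point to three families indexed by $\lambda\in\Lambda$ --- the per-sample dual value $\min_p\ell(p,X,\lambda)$, the per-sample constraint values $g_j(p_\lambda(X),X)$ for each $j$, and the per-sample objective value $f(p_\lambda(X))$ --- and pass off the net using Lipschitzness in $\lambda$ (the dual value is $1$-Lipschitz in $\|\cdot\|_1$ directly, as an infimum of affine functions; the other two inherit Lipschitzness through $p_\lambda$). The key quantitative point is that the per-sample dual value ranges over an interval of width $\Theta(Jd)$ --- since $f$ takes values in $[4,2/\gamma]$ on $[\gamma,1-\gamma]$ while $|\sum_j\lambda_j(g_j-c_j)|\le Jd$ on $\Lambda$ --- whereas $g_j(p_\lambda(X),X)\in[0,1]$ and $f(p_\lambda(X))\in[4,2/\gamma]$ have range $O(1)$. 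Taking $\eta = \Theta(\epsilon/J)$ so that moving within a net cell perturbs each of these quantities by $O(\epsilon)$, and union bounding over the net and over $j$, with probability $\ge 1-\delta_2$ we obtain, uniformly over $\lambda\in\Lambda$ and $j$: $\widehat D(\lambda)\le D(\lambda)+\epsilon/2$, $\E[g_j(p_\lambda(X),X)]\le\frac1n\sum_i g_j(p_\lambda(X_i),X_i)+\epsilon$, and $\E[f(p_\lambda(X))]\le\frac1n\sum_i f(p_\lambda(X_i))+\epsilon/2$. The first of these is binding: its squared range $\Theta(J^2d^2)$ multiplies the $J\log(Jd/(\epsilon\delta_2))$ contributed by the net, producing $n = O\big(\tfrac{J^3d^2}{\epsilon^2}\log\tfrac{Jd}{\epsilon\delta_2}\big)$; the other two families need only $O\big(\tfrac{J\log(Jd/(\epsilon\delta_2))}{\epsilon^2}\big)$ samples and are absorbed. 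Tracking that both the range and the required net resolution scale with $J$ is exactly where the factor $J^3$ arises, and is the step I expect to take the most care.

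Finally I would combine everything. On the event $\|\widehat\lambda\|_\infty\le d$ (probability $\ge 1-\delta_1$) intersected with the uniform-convergence event (probability $\ge 1-\delta_2$): feasibility follows from the second uniform bound at $\lambda=\widehat\lambda$ together with the KKT inequality, $\E[g_j(\widehat p(X),X)] \le \frac1n\sum_i g_j(\widehat p(X_i),X_i) + \epsilon \le c_j+\epsilon$; near-optimality follows by chaining the third uniform bound, the collapse identity, the first uniform bound, and weak duality ($D(\lambda)\le\E[f(p^*(X))]$ for every $\lambda\ge0$ since $p^*$ is feasible): $\E[f(\widehat p(X))] \le \frac1n\sum_i f(\widehat p(X_i)) + \epsilon/2 = \widehat D(\widehat\lambda) + \epsilon/2 \le D(\widehat\lambda) + \epsilon \le \E[f(p^*(X))] + \epsilon$. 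A union bound over the two failure events gives the claim. The hidden constants depend on $\gamma$ but not on $J,d,\epsilon,\delta$, and beyond the hypotheses of the proposition the argument uses only the standing assumptions that $\gamma>0$ and each $g_j$ is convex, $1$-Lipschitz, and $[0,1]$-valued in $p$ (so that $p_\lambda$ is well-defined and Lipschitz and Danskin's theorem applies).
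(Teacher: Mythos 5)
Your proof is correct and arrives at the same rate, but the optimality half takes a genuinely different route from the paper's. For feasibility the two arguments coincide: an $\epsilon$-net over $[0,d]^J$, Hoeffding plus a union bound at the net points, and Lipschitzness of $\lambda \mapsto g_j(p_\lambda(X),X)$ (which the paper isolates as Lemma \ref{lemma:constraints-smooth}, with explicit constant $1/32$ via the implicit function theorem) to pass off the net; the paper's versions of your convexity/uniqueness observations are Lemmas \ref{lemma:strict-convex} and \ref{lemma:constraints-smooth}. For near-optimality, however, the paper never concentrates the empirical objective or the dual value: it writes $\E[f(\widehat p)] - \E[f(p^*)] \leq -\sum_j(\E[g_j(\widehat p,X)]-c_j)\widehat\lambda_j$ using population strong duality and complementary slackness for $\lambda^*$, then runs a two-case analysis on the slack of each constraint (small slack costs at most $\tfrac{17}{16}\epsilon\,\widehat\lambda_j$; large population slack forces large sample slack, hence $\widehat\lambda_j=0$ by sample complementary slackness), incurring a total loss of $O(Jd\epsilon)$ that is removed by rescaling $\epsilon \to \epsilon/(Jd)$ --- which is exactly where its $J^2d^2$ factor enters. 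You instead use the collapse identity $\widehat D(\widehat\lambda)=\tfrac1n\sum_i f(\widehat p(X_i))$ from the dual KKT conditions and chain it with weak duality $D(\widehat\lambda)\leq\E[f(p^*)]$, which is arguably cleaner and does not require strong duality for the population problem; the price is that you must uniformly concentrate the per-sample dual value, whose range $\Theta(Jd)$ reproduces the same $J^2d^2$ factor, and the per-sample objective $f(p_\lambda(X))$, whose range $\Theta(1/\gamma)$ introduces a $\gamma$-dependence in a lower-order term that the paper's argument avoids entirely (its concentration is applied only to the $[0,1]$-valued constraint functions). Since $\gamma$ is a fixed user constant and the bound is stated in $O(\cdot)$ over $J,d,\epsilon,\delta_2$, this does not change the stated guarantee.
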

Rearranging the theorem statement, the loss in both the objective and the constraints shrinks with high probability at a rate of $O(n^{-\frac{1}{2}})$ when $J, d$, and $a_{\text{min}}$ are fixed. A proof is given the appendix. The main idea is that optimal policies for this problem are parameterized just by the values of the dual parameters. Notably, this allows us to escape any dependence on ``dimension" terms for the policy class except the number of constraints $J$. By contrast, sample complexity guarantees for policy learning from observational data typically require imposing a bound on a complexity measure such as the VC dimension or covering number \citep{athey2021policy,chernozhukov2019semi,swaminathan2015batch}, regardless of whether the optimal policy actually belongs to a small-dimensional class. In our setting, we obtain convergence to the optimum at the root-$n$ parametric rate without the need for a separate realizability assumption.  

The convergence rate does depend on $||\widehat{\lambda}||_\infty$, the maximum value of the optimal dual parameters for the sample problem. Intuitively, the magnitude of the dual variables quantifies how sensitive the objective function is to solutions that leave some slack in the constraints due to sampling noise, e.g., slightly underutilize the budget. As discussed below, for well-behaved constraint structures, this term can be controlled using the fact that the objective has bounded gradients. It also depends on $a_{\text{min}}$, a lower bound on the variance proxies chosen by the analyst. In isolation, a smaller value of $a_{\text{min}}$ implies that more samples are required in Proposition \ref{prop-sample-complexity} because changes to $\widehat{\lambda}$ can have a larger impact on $\hat{p}$. In our analysis below, after we account below for the impact that the variance proxies have on $||\widehat{\lambda}||_\infty$, what will end up mattering is the ratio $\frac{a_{\text{max}}}{a_{\text{min}}}$. Intuitively, if the analyst mandates greater heteroskedasticity across values of $X$, the objective function becomes sensitive to a smaller set of $X$'s and we need more samples to ensure this region is covered well. For our suggested uncertainty sets, $\frac{a_{\text{max}}}{a_{\text{min}}}$ can typically be regarded as a constant (e.g.\ $\frac{a_{\text{max}}}{a_{\text{min}}} = 1$ in Example 1).   

In order to illustrate conditions under which sample complexity can be further bounded in this manner, we turn to the case where the constraints are linear in $p$ (as are all the constraints discussed above), represented as $\E[g_j(X) p(X)] \leq c_j$ for functions $g_j$ that depend only on $X$. Let $g(X_i) = [g_1(X_i)...g_J(X_i)]$ collect the coefficients for all of the constraints with respect to individual $i$. Note that $g$ is a random variable which depends on the iid draws of the $X$. We will show that the key terms appearing in Proposition \ref{prop-sample-complexity} can be bounded whenever $g$ satisfies a standard small ball condition from random matrix theory \citep{lecue2017sparse,yaskov2016controlling}: 
\begin{align*}
    \inf_{||\nu||_2 = 1} \Pr\left(|g(X)^T\nu| \geq \alpha\right) \geq \beta
\end{align*}
for some $\alpha, \beta > 0$. Roughly, this says that the entries are $g$ are not too small with very high probability; e.g., in Problem \ref{problem:main} it requires that the utility function is not too concentrated on a low-probability set of individuals. Under this condition, the KKT system defining $\widehat{\lambda}$ is well-conditioned with high probability and we obtain
\begin{prop}
Suppose that the constraints are linear in $p$ and that $g(X_i)$ obeys the small ball condition with parameters $\alpha, \beta > 0$. Then $||\widehat{\lambda}||_\infty \leq \frac{\sqrt{2}a_{\text{max}}}{\alpha \sqrt{\beta} \gamma^2}$ with probability $1 - \Theta(\exp(-\beta^2 n))$ and so $n = O(\frac{J^3}{\epsilon^2} \frac{1}{\alpha^2 \beta \gamma^4} \left(\frac{a_{\text{max}}}{a_{\text{min}}}\right)^2 \log \frac{J}{\alpha\beta\gamma\delta_2 \epsilon} )$ samples suffice for the guarantees of Proposition \ref{prop-sample-complexity} to hold with probability $1 - \Theta(\exp(-\beta^2 n)) - \delta_2$. \label{corollary-small-ball}
\end{prop}
To interpret this result, consider the case of Problem \ref{problem:main} where we have only the utility and budget constraints. Here, we have $g(X) = [u(X), 1]$ since every individual counts equally towards the budget constraints. Since $u(X) \in [0,1]$, the minimizing $\nu$ in the small ball condition will be $\nu = [1, 0]$ and the condition reduces to a question about the distribution of the variable $u(X)$: it will be satisfied with reasonable constants as long as it is not the case that the utility is concentrated entirely on a very uncommon set of individuals. Conversely, if the utility function indeed assigns most of its weight to a very small-probability subset, then we will need more samples to ensure that region of the covariate space is well-represented. Even under a worst-case distribution of $u$, this cannot happen as long as the average utility value is not too small. Formally, a reverse Markov inequality yields that:
\begin{prop}
    In Problem \ref{problem:main}, the small-ball condition is satisfied for any choice of $\alpha < \E[u(X)]$ and $\beta = \frac{\E[u(X)] - \alpha}{1 - \alpha}$. \label{prop:reverse-markov}
\end{prop}
In our running example where $u$ is given by a risk score, $\E[u(X)]$ is just the frequency of the outcome of interest and can be treated as a constant. In practice, we find that that the distributions of predicted risk in our empirical setting easily satisfy a small-ball condition; see the empirical CDFs of $u(X)$ for each dataset in Figure \ref{fig:utility-cdf}. For example, we have that $\Pr\left(u(X) \geq 0.3\right) \geq 0.69$ for the first dataset (reentry) and $\Pr\left(u(X) \geq 0.3\right) \geq .84$ for the second dataset (housing).

\begin{figure}
    \centering
    \includegraphics[width=0.5\linewidth]{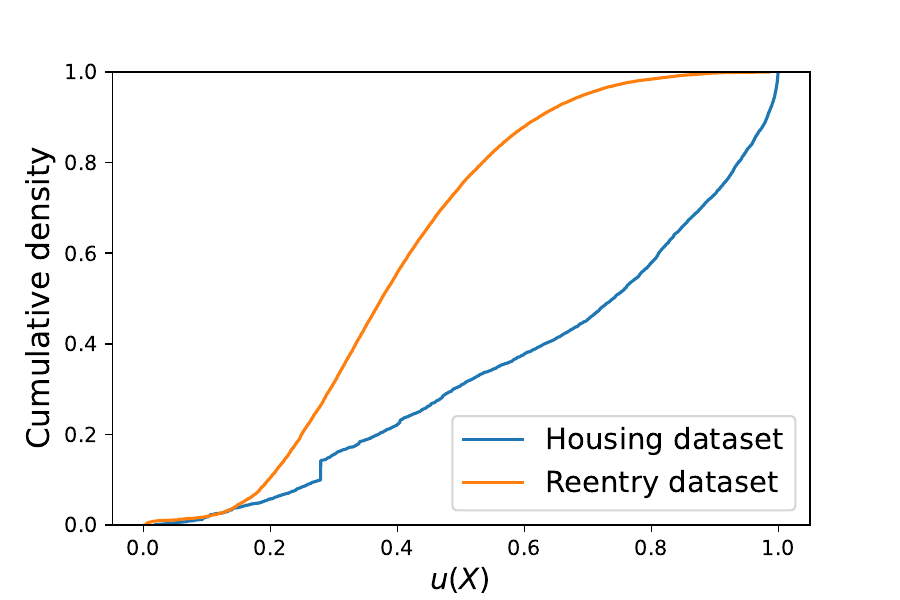}
    \caption{Empirical CDF of the score $u(X)$ for each dataset. }
    \label{fig:utility-cdf}
\end{figure}

\subsection{Alternate estimands}
\label{section:alternate-estimands}
In some settings, policymakers may wish to identify effects other than the ATE. For example, they may wish to identify the average treatment effect only for the $\alpha$-fraction of the population greatest risk for an adverse outcome (the beneficiaries under need-based targeting), or some other subgroup of particular interest. Our framework can be naturally modified to accommodate other estimands. In particular, let $\mathcal{S} \subseteq \mathcal{X}$ be the target population for which we wish to estimate the group average treatment effect. The objective function becomes 
\begin{align*}
\E\left[\left(\frac{a_1(X)}{p(X)} + \frac{a_0(X)}{1-p(X)}\right)\Big|X \in \mathcal{S}\right] = \frac{1}{\Pr[X \in \mathcal{S}]} \cdot \E\left[1\left[X \in \mathcal{S}\right]\left(\frac{a_1(X)}{p(X)} + \frac{a_0(X)}{1-p(X)}\right)\right]
\end{align*}
which counts individuals towards the variance only if they belong to the target population. The modified optimization problem can then be solved identically to as discussed above.

\subsection{Handling uncertainty in $u$}
In many settings policymakers may face uncertainty in knowing the utility $u$ that they truly wish to target on. For example, when $u$ is intended to target higher-risk individuals, any learned risk model may only be a noisy proxy for the ideal of directly allocating resources to individuals will experience adverse outcomes in the future. Thus far, our methods have been deliberately agnostic about where $u$ comes from and whether there is uncertainty about its relationship to a true target of interest. Different applications will involve different targets for prediction, classes of models, and challenges in estimation, all of which influence uncertainty quantification. Sometimes $u$ may not be a machine learning model at all: hand-constructed scoring rules are common in settings like public housing, social services, often aiming to capture a sense of need instead of making a prediction. Our general approach thus takes $u(X)$ at "face value" as the policymaker's priorities.

However, we briefly sketch a strategy to account for uncertainty when $u(X)$ is a predictive model for some outcome $Z \in \{0,1\}$ (i.e., where $u$ is a risk score, as in many applications cited in above). Uncertainty arises because individuals may have $Z = 1$ (a bad outcome) without having large $u(X)$ if the risk model is imperfect. Suppose want to guarantee that a fraction $\eta$ of individuals who would have $Z = 1$ are offered treatment but only $X$ is observed at treatment assignment. If $u(X)$ were the Bayes-optimal predictor for $Z$, setting the constraint $E[p(X)u(X)] \geq \eta E[u(X)]$ would suffice. However if $u$ is not well-calibrated within some strata of $X$, this might not guarantee coverage with respect to the actual outcome. Intuitively, we can simply shade the value of the constraint higher to compensate. Formally, suppose we have a held-out dataset $\{(X_i, Z_i)\}_{i = 1}^n$. We can search for a value $d$ such that, when the constraint $E[p(X)u(X)] \geq d$ is imposed, $\frac{1}{n}\sum_{i = 1}^n Z_ip(X) = \eta\cdot \frac{1}{n}\sum_{i = 1}^n Z_i$. If such a $d$ exists, standard concentration bounds allow us to translate this into a guarantee that  $E[Z \cdot p(X)] \geq  \eta E[Z] - \epsilon$ with high probability.

\section{Results}

\subsection{Experimental setup}
\paragraph{Setting}
We apply our methods to the design of evaluations for interventions in a human services setting. We use data provided by the Allegheny County Human Services Department, which administers programs for mental and behavioral health, substance use treatment, housing assistance, support for formerly incarcerated people, aging, and more. The department serves a population of over 1.2 million people and allocates a total budget of more than \$1 billion across such programs. It regularly uses predictive risk models to prioritize individuals with greater need for access to program with limited resources, but also faces the overall challenge of determining which programs are working in order to concentrate resources on more effective programs. 

\paragraph{Datasets} We use two datasets provided by the Allegheny County Human Services Department. The first covers 9213 individuals who were eligible in the past for public housing assistance (prior to the creation of a new housing program). We use 40\% of the data to train a random forest classifier to predict the probability of having at least 4 emergency department visits in the next year. The base rate of this outcome is approximately 54\%. The covariates $X$ are a set of 83 features collected by the county from administrative data sources, including age, education level, previous appearances in the court system, and past utilization of behavioral health, housing, or medicaid services, mimicking the process used in practice at the county to construct predictive models in variety of service areas. We then use the remaining 60\% of the data to estimate the performance of the optimal policy. In line with current practice, the utility function $u(X)$ is the risk score output by the classifier, indicating a preference for treating higher-acuity individuals. 

The second dataset contains the outcomes of 76052 incarcerated people released from jail. The primary outcome of interest is reentry to jail within the next year; the frequency of this outcome in our data is approximately 39\%. We predict status-quo reentry risk using a random forest classifier and the same set of administrative data features from the first domain, and set the utility function $u$ to be the predicted reentry risk. We use 60\% of the data to train the predictive model and the remaining 40\% to estimate the performance of the optimal policy.  

\paragraph{Simulation setup} In order to simulate the performance of potential policies on the test set, we need to fix a distribution of the outcome variables under control and treatment, which in turn implies the values of the outcomes variances that appear in the efficiency bound. We set $\Pr(Y(0) = 1|X)$ to be equal to the predicted probability from the random forest classifier that was fit on the train set. Since the outcome is binary, the variance is $\V(Y(0)|X) = \Pr(Y(0) = 1|X) \cdot (1 - \Pr(Y(0) = 1|X))$. We then add a synthetic treatment effect to arrive at the distribution of $Y(1)$. In our primary analysis, we simulate the treatment effect $\tau(X) = -\beta \cdot \Pr(Y(0) = 1|X)$, where $\beta$ is the relative reduction in the frequency of adverse outcomes. Accordingly, we have $\Pr(Y(1)|X) = (1-\beta)\Pr(Y(0) = 1|X)$ In absolute terms, this generates a heterogeneous effect where individuals with higher baseline risk experience a larger effect (as service designers typically hope). Later, we conduct sensitivity analyses to the case where treatment effects are either constant or U-shaped (i.e., with largest effects for individuals of moderate risk). Our main results are shown for $\beta = 0.1$, with other values of $\beta$ and alternate forms of treatment effect heterogeneity shown in the appendix (with generally very similar results). We emphasize that our method does not require any knowledge of the treatment effects during policy optimization, so different choices of $\beta$ or structures for treatment effect heterogeneity only impact the sample size required to detect treatment effects.

% \paragraph{Simulation setup} For each dataset, the utility function $u$ is set to be the risk of an adverse outcome absent treatment, estimated via a random forest. For the first domain, the outcome is one-year reentry and for the second the outcome is having at least  four emergency department visits in the following year. We then simulated a hypothetical intervention with specified budget $b$ and a treatment effect given by $\tau(X) = \beta \cdot \Pr(Y(0) = 1|X)$, where $\beta$ is the relative reduction in the frequency of adverse outcomes caused by the treatment and $\Pr(Y(0) = 1|X)$ is estimated via the random forest. Our main results are shown for $\beta = 0.1$. Results for other values are qualitatively similar and are given in the appendix, along with details of the setup. We remark that this simulated treatment effect is heterogeneous in the sense often hypothesized by service designers, where treatment effects are largest for those with highest risk (largest $\Pr(Y(0) = 1|X)$). 

\paragraph{Evaluation metrics and baselines} We evaluate our designs on two metrics. First, utility from targeting, presented in terms of the fraction of individuals with an adverse baseline outcome who would be allocated treatment (i.e., a measure of recall or sensitivity). The second metric is the variance with which we can estimate the ATE. To provide a concrete interpretation for the variance, we present it in terms of the sample size needed to power an estimate. Specifically, we assume standard normality-based confidence intervals and find the sample size needed to detect an effect of at least $\tau$ with 5\% type-1 error and 80\% power. We benchmark our designs against two policy-relevant points. First, the sample size needed to power a standard RCT that randomizes the same treatment budget uniformly across individuals. Second, the sample size needed to power a regression discontinuity (RD) estimate of the \textit{local} average treatment effect around the budget cutoff. While the RD is not directly comparable as it estimates a different effect, it provides a reference point that is commonly used when policies like need-based targeting create clear discontinuities. 

To calculate sample sizes for the regression discontinuity (RD) estimates, we follow a procedure based on \citet{schochet2009statistical} and \citet{deke2012statistical}. This requires two steps. First, we use the distribution of $u(X)$ to estimate the design effect of a standard OLS-based RD estimator compared to a RCT. Second, we estimate the optimal bandwidth for a locally linear estimator using the method of \citet{imbens2012optimal} in order to determine what fraction of the sample would be used. There are two important caveats for this analysis. First, as discussed above, the RD targets a different and incomparable estimand (the treatment effect for individuals at the margin, versus the average effect). Second, the RD returns a potentially biased estimate, as it uses a locally linear model with a bandwidth that optimizes an empirical bias-variance tradeoff for the MSE. By contrast, estimates derived from the designs output by our method are guaranteed to be unbiased (as the propensity scores are known) and so accrue error only due to variance. We ignore the RD estimator's bias in power calculations, erring towards slightly optimistic results for the RD.  

To contextualize the gains from optimization in this setting, we also compare to two heuristic ways of setting treatment probabilities such that individuals at higher risk receive more treatment. In the first (``scaling") we set $p(X_i) \propto u(X_i)^{\alpha}$ for some parameter $\alpha > 0$. For the second (``softmax"), we set $p(X_i) \propto \exp(\alpha \cdot u(X_i))$. In both cases, $\alpha$ is a temperature parameter where larger values concentrate more probability on higher-risk individuals and $\alpha = 0$ recovers uniform allocation. We vary $\alpha$ to obtain a Pareto frontier for each baseline that can be compared to that output by our method. For both methods, obtaining valid allocation probabilites requires that we somehow clip the values to [0, 1] and normalize to obey the budget constraint. We implement a simple heuristic that successively normalizes by $||p||_1$ and clips the entries to be at most 0.99. If after clipping some of the budget is left over, we repeat the process on all of the non-clipped entries. We emphasize though that both of these methods are not full-fledged competitors in multiple senses. First, they are unable to accommodate additional constraints on the policy class (e.g., fairness constraints). Second, they require that the entire cohort of eligible individuals is present at one time, as opposed to handling online arrivals like our method. However, real service delivery settings almost always entail online arrivals.  

\subsection{Simulation results}
\begin{figure}
    \centering
    \includegraphics[width=1.8in]{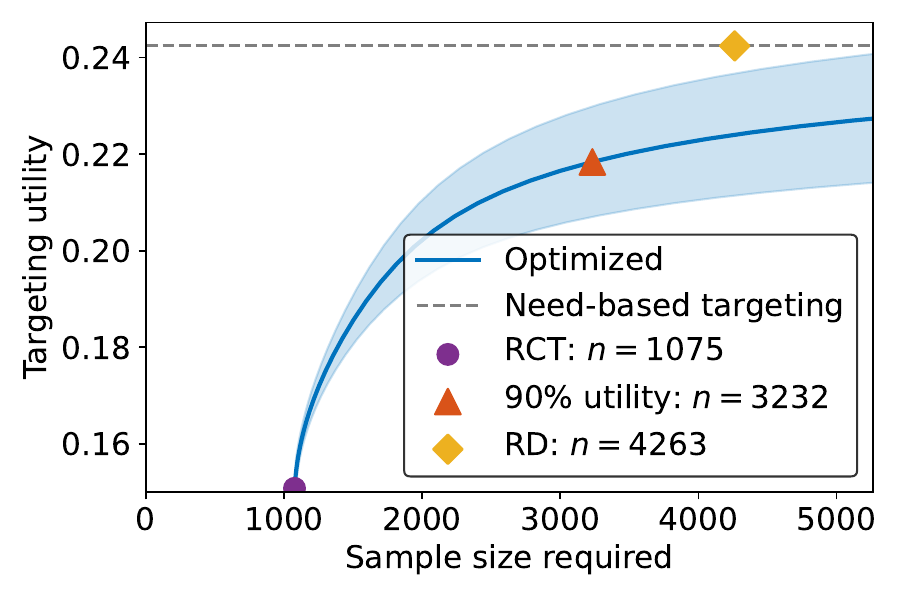}
    \includegraphics[width=1.8in]{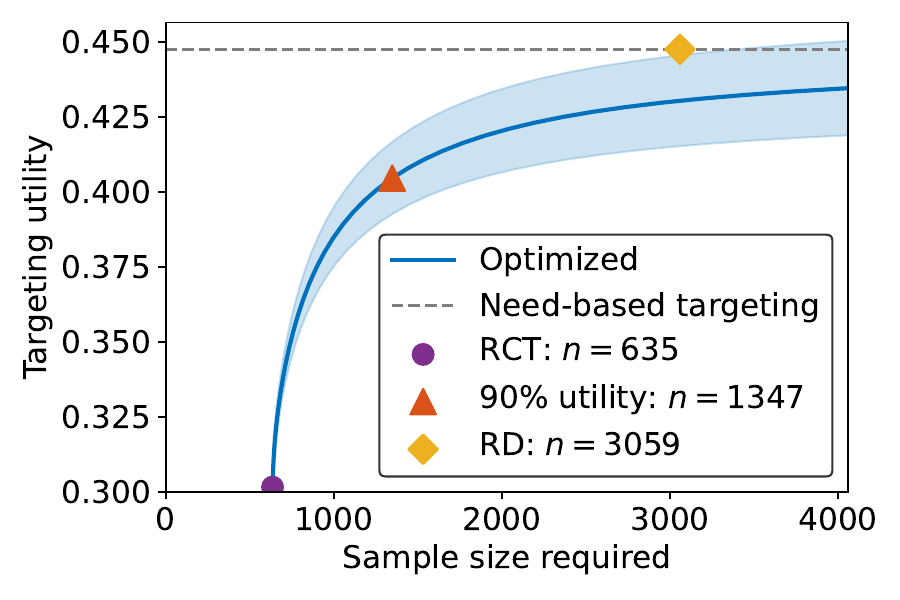}
    \includegraphics[width=1.8in]{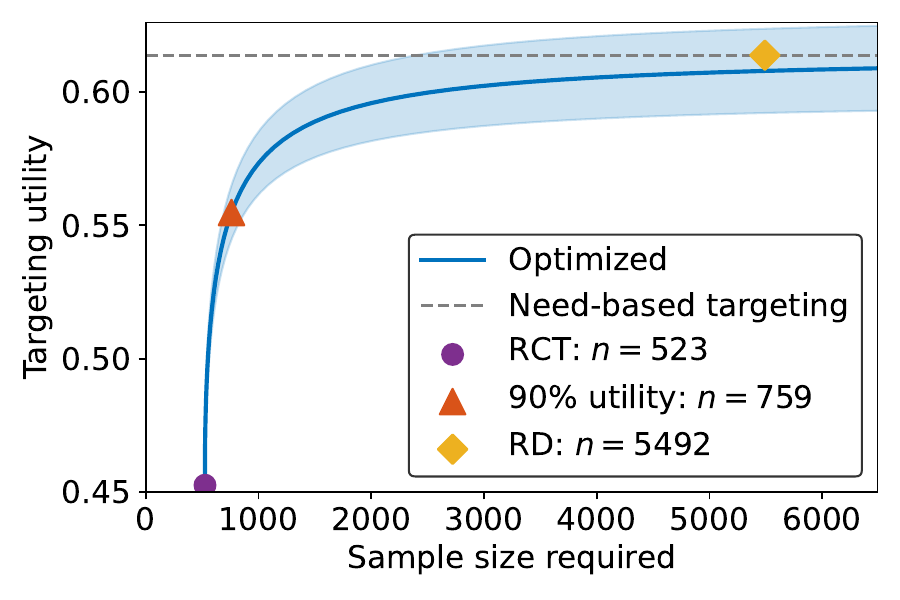}
    \includegraphics[width=1.8in]{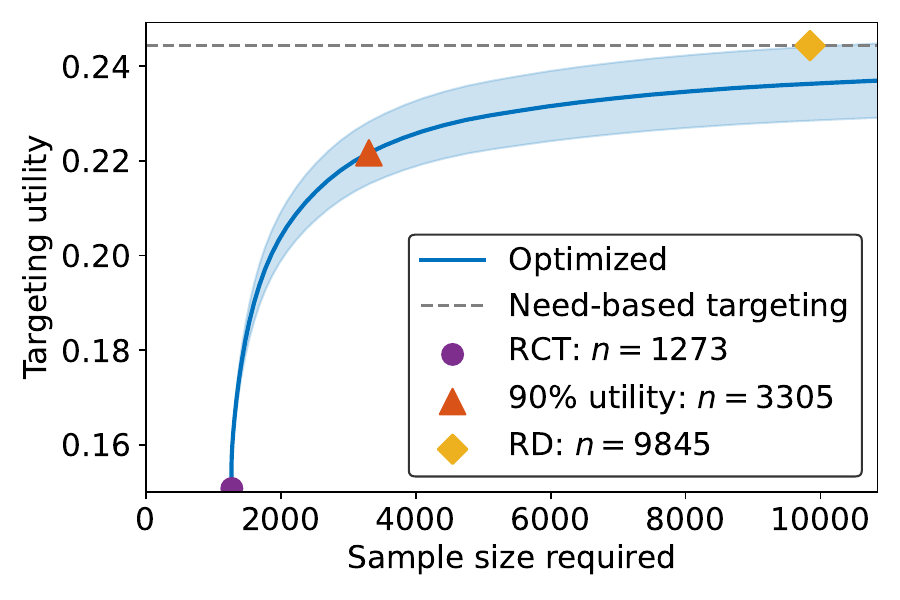}
    \includegraphics[width=1.8in]{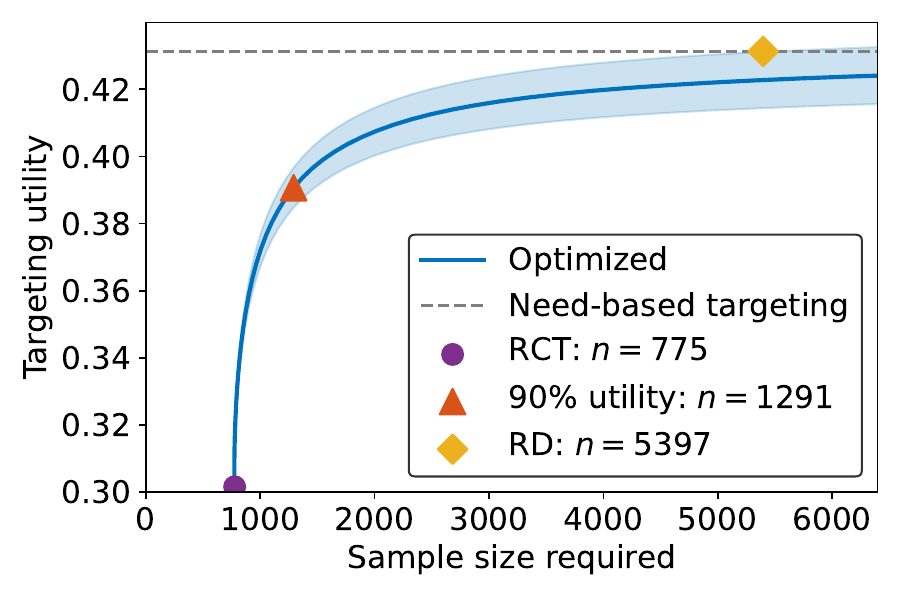}
    \includegraphics[width=1.8in]{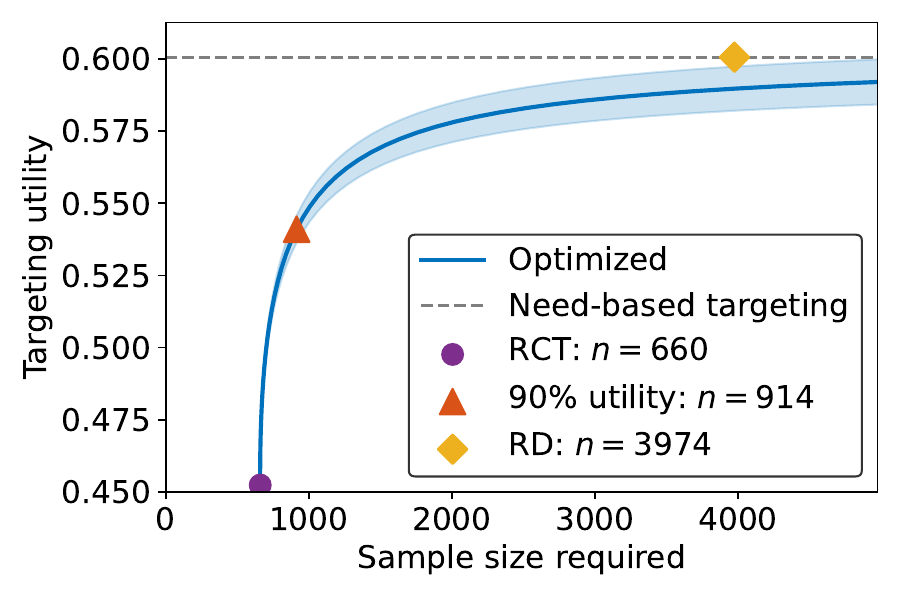}
    \caption{Utility vs sample size required to estimate the average treatment effect. Top row: housing dataset. Bottom row: reentry dataset. Columns from left to right: $b = 0.15, 0.3, 0.45$. Each blue curve is produced by varying the value of the utility constraint, solving for the corresponding optimal policy at each point, and calculating the variance of the corresponding ATE estimate (presented in terms of sample size required to power an estimate). The purple dot shows the sample size and utility of running a RCT which randomizes individuals uniformly with the same budget. The gray dashed line shows the utility of need-based targeting (without randomization). The gold diamond shows the sample size required to power a regression discontinuity estimate of the local ATE after need-based targeting. The red triangle shows the point on the curve at which the optimized policy obtains 90\% of the best-possible utility.}
    \label{fig:main-results}
\end{figure}

\paragraph{Primary results} Figure \ref{fig:main-results} gives our primary simulation result: the Pareto frontier between sample size and targeting utility as computed by our method. For our primary analysis, we use the conservative variance model in Example 1, since it does not require assuming any prior knowledge about the outcome variances and is hence the most widely applicable. Later, we examine whether gains are possible from using more knowledge about the variance structure. Every point on the curve corresponds to a specific set of assignment probabilities output as the solution to an instance of the optimization problem detailed above. The curve is traced out by varying the value of the constraint on utility. Shaded regions give 95\% confidence bands computed using the multiplier bootstrap \citep{van1996weak,kennedy2019nonparametric} with 10,000 replicates. We observe that the tradeoff between these two objectives is relatively favorable, with a strongly concave shape to the curve. That is, most of the gap in utility between the RCT and need-based targeting can be made up with relatively small costs to sample size. Intuitively, this behavior is expected because utility increases linearly with $p(X)$ while variance increases much faster than linearly as $p(X)$ approaches 1. Quantitatively, the optimized policy can achieve 90\% of the best possible utility using approximately 1.5-3 times the samples required for a RCT.  By comparison, we find that the RD requires 4-10 times the sample size of a RCT, and often over 3 times that of the ``90\% utility" point for our method. This is a consequence of the fact that the RD can only make use of samples relatively close to the decision boundary. Accordingly, while a RD may be attractive for a policymaker who places a very high premium on targeting, it imposes significant costs in terms of sample size requirements. 

In Figure \ref{fig:heuristics}, we compare to the two heuristic strategies for scaling allocation probabilities with utility. Both generally underperform the optimized probabilities. Across the six settings pictured (two datasets and three values of the budget constraint), the heuristic strategies come close to the optimum in one. Across the others, they underperform by varying amounts, with little consistent pattern across settings. In many settings, achieving 90\% of the best possible utility would require very large sample sizes (past the end of the plot) for one or both of the heuristics. We conclude that systematic optimization offers stronger and more robust performance across varied data distributions and budgets.

\begin{figure}
    \centering
    \includegraphics[width=1.8in]{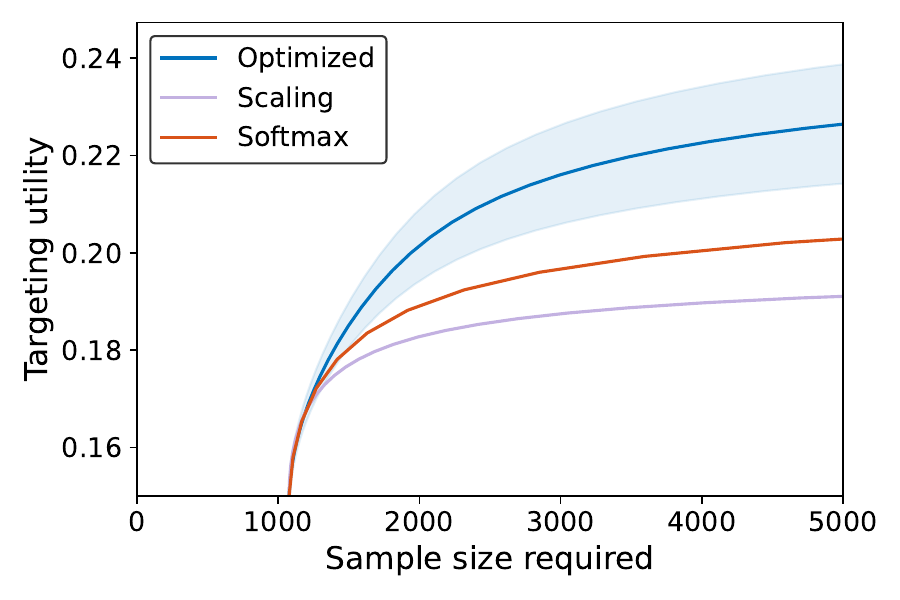}
    \includegraphics[width=1.8in]{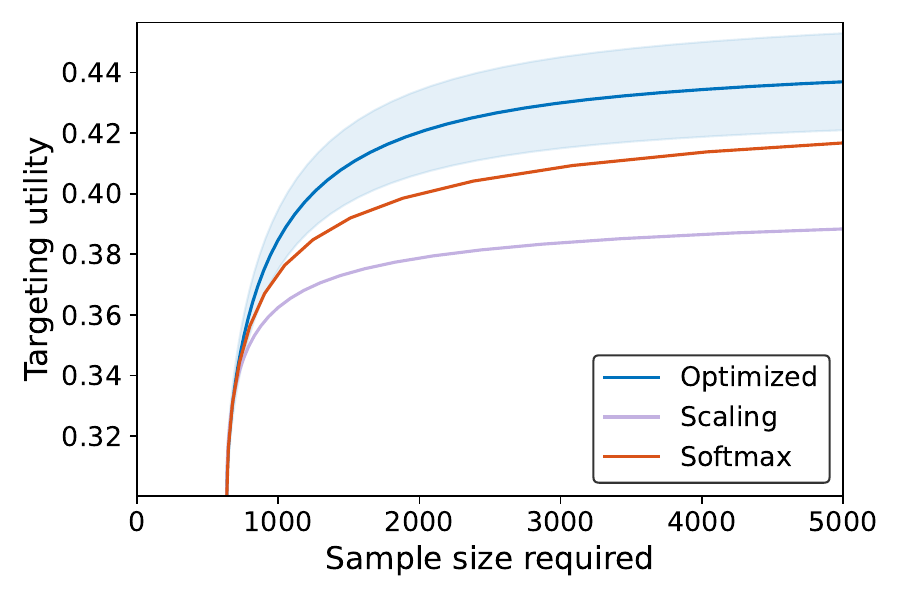}
    \includegraphics[width=1.8in]{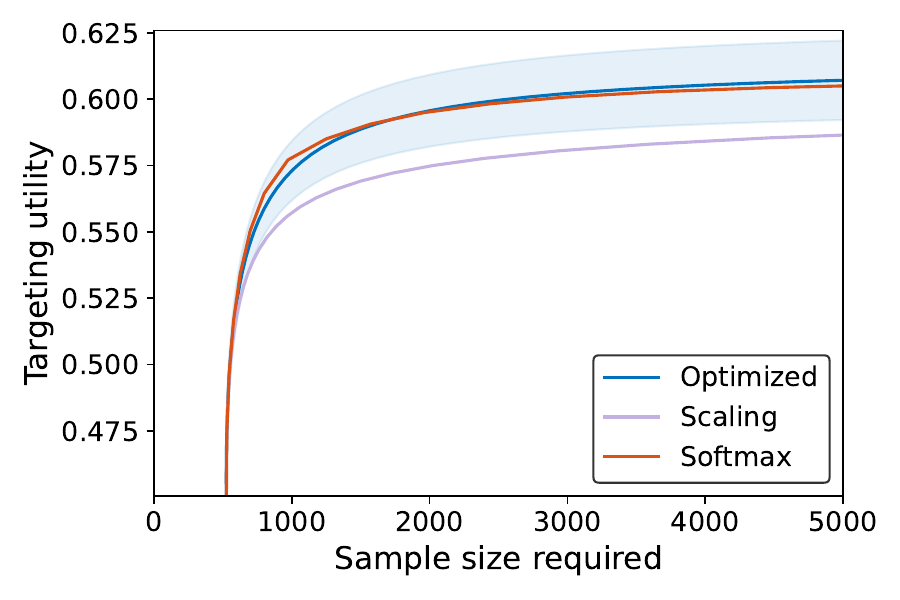}
    \includegraphics[width=1.8in]{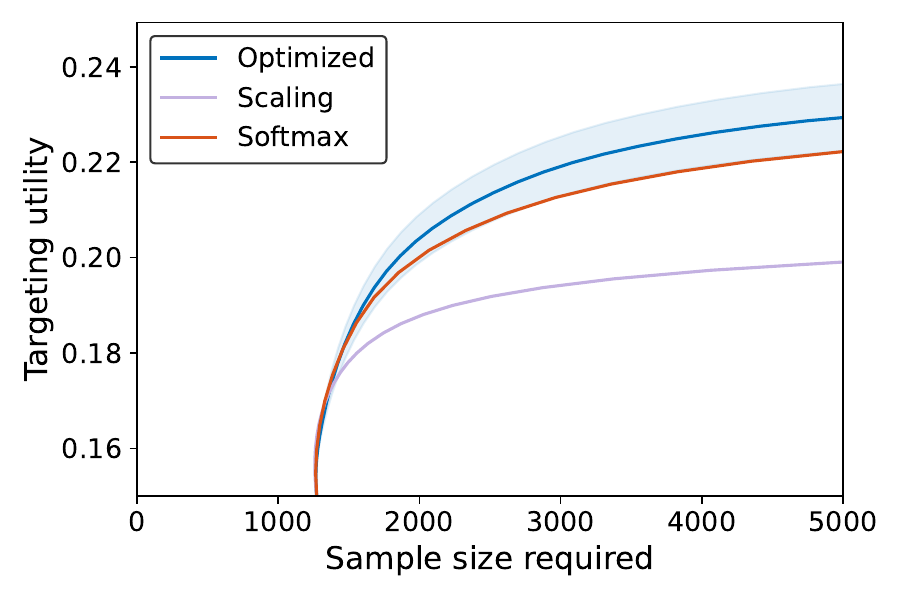}
    \includegraphics[width=1.8in]{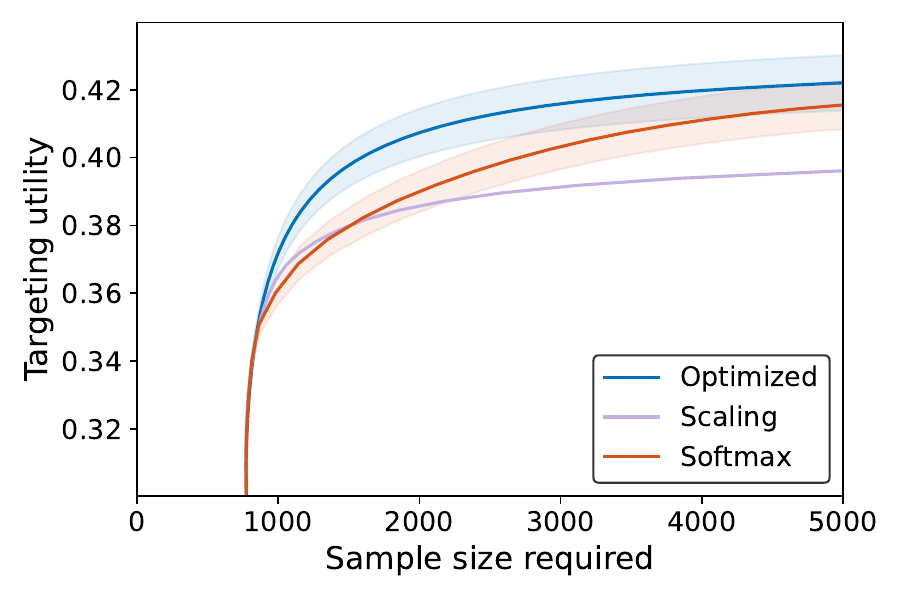}
    \includegraphics[width=1.8in]{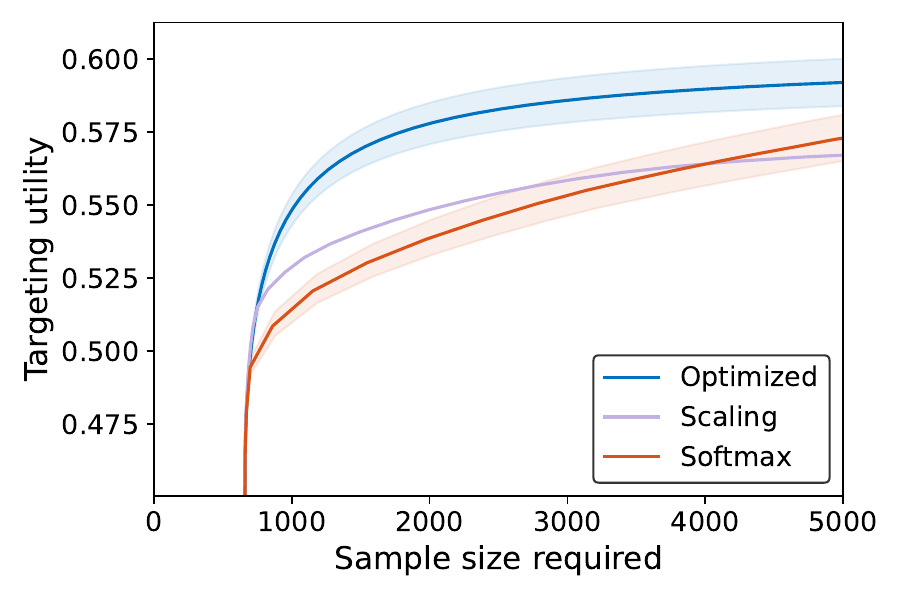}
    \caption{Comparison to heuristics for experimental design. Top row: housing data. Bottom row: reentry data. Columns from left to right: $b = 0.15, 0.30, 0.45$. } \label{fig:heuristics}
\end{figure}

% If a the policymaker is willing to apply our proposed designs while incurring the same sample size as the RD, utility typically suffers by 3-5\% (in relative terms). This can be viewed as the price of identifying a non-local estimand instead of the marginal effect estimated by the RD. 

\begin{figure}
    \centering
    \includegraphics[width=2.5in]{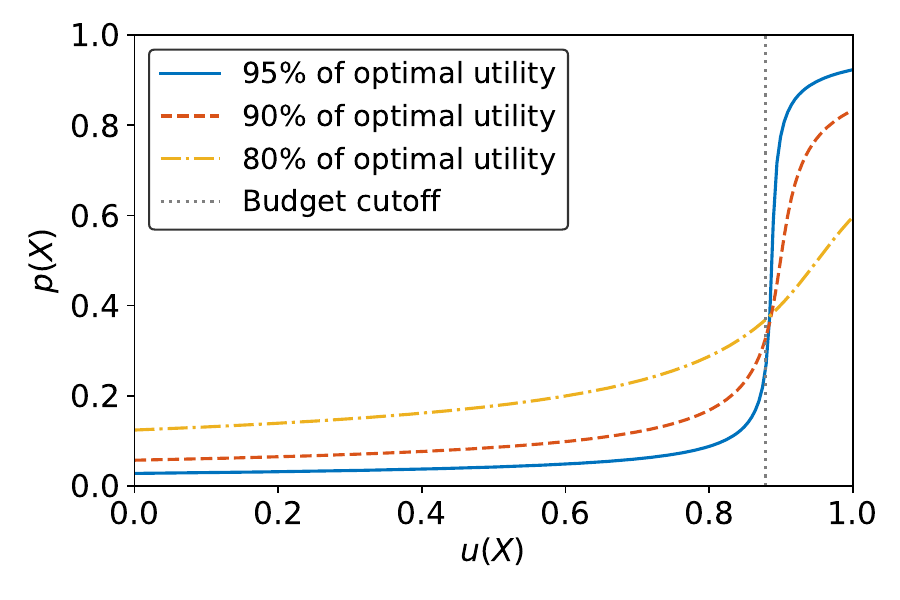}
    \includegraphics[width=2.5in]{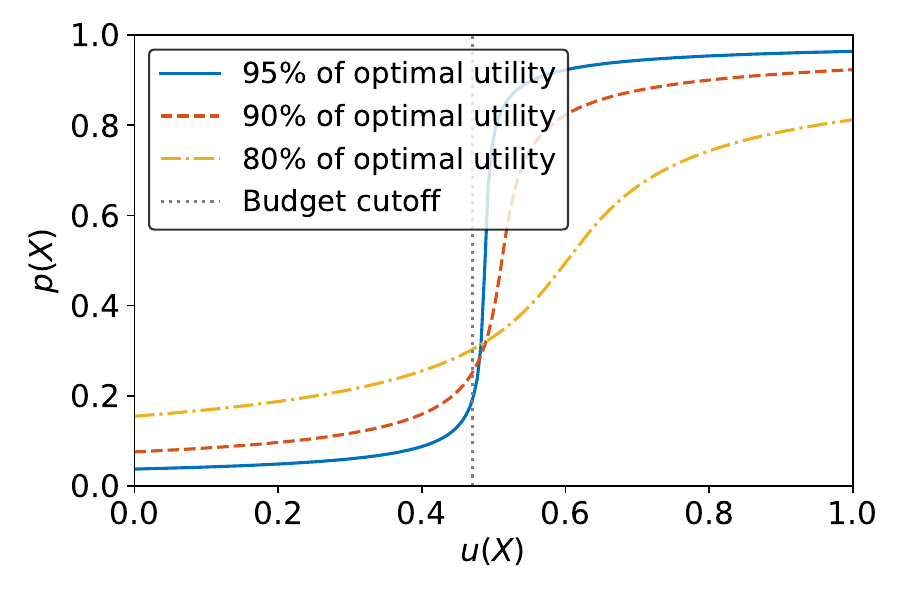}
    \caption{Visualization of the optimal policy at varying levels of the utility constraint. Left: housing dataset. Right: reentry dataset. Each curve plots the optimal treatment assignment probability $p(X)$ as a function of the individual's utility score $u(X)$ (for the specified value of the overall utility constraint). The vertical dashed line gives the $(1 - b)$-quantile of $u$, where $b$ is the fraction of individuals treated. Both plots are for $b = 0.3$.}
    \label{fig:example-policies}
\end{figure}

\paragraph{Visualizing optimal policies} Figure \ref{fig:example-policies} shows examples of the policies output by our method at varying levels of the utility constraint, plotting the optimal $p(X)$ as a function of an individual's utility value $u(X)$. The optimal policies take an intuitive S-shaped curve with steepness that depends on the utility constraint. At larger values of the constraint, the optimal policy effectively redistribute resources from individuals whose utility value lies just over the cutoff for need-based targeting to individuals in the region below the cutoff.

\begin{figure}
    \centering
    \includegraphics[width=2.5in]{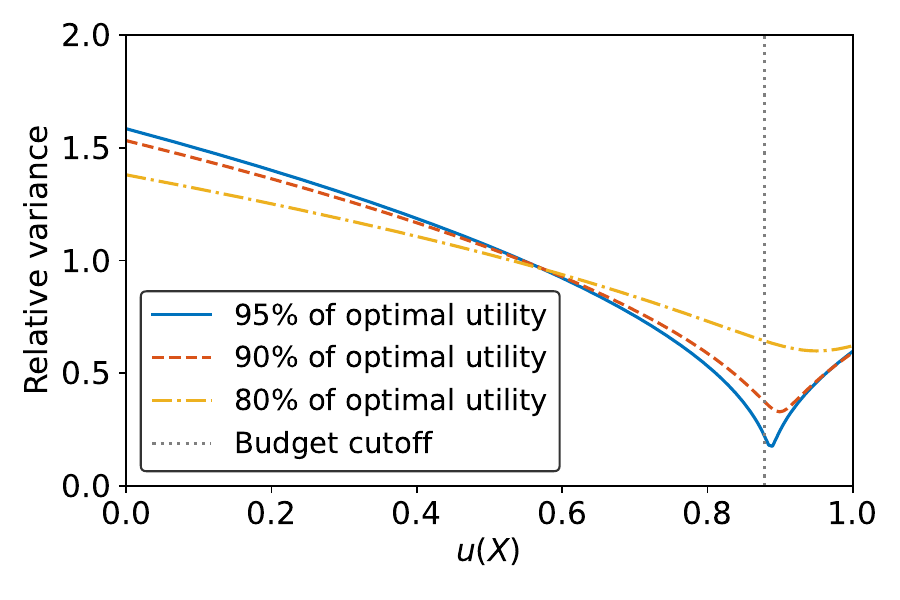}
    \includegraphics[width=2.5in]{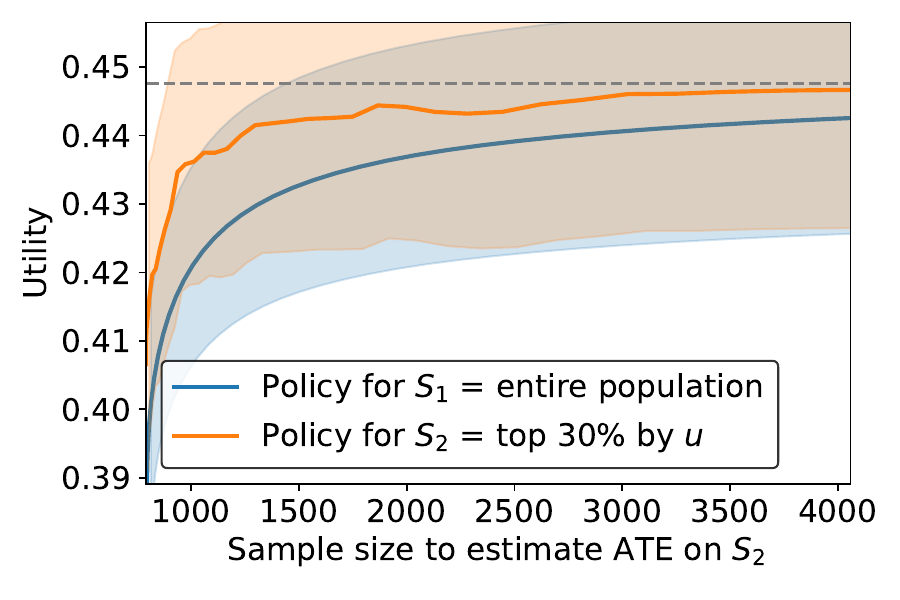}
    \includegraphics[width=2.5in]{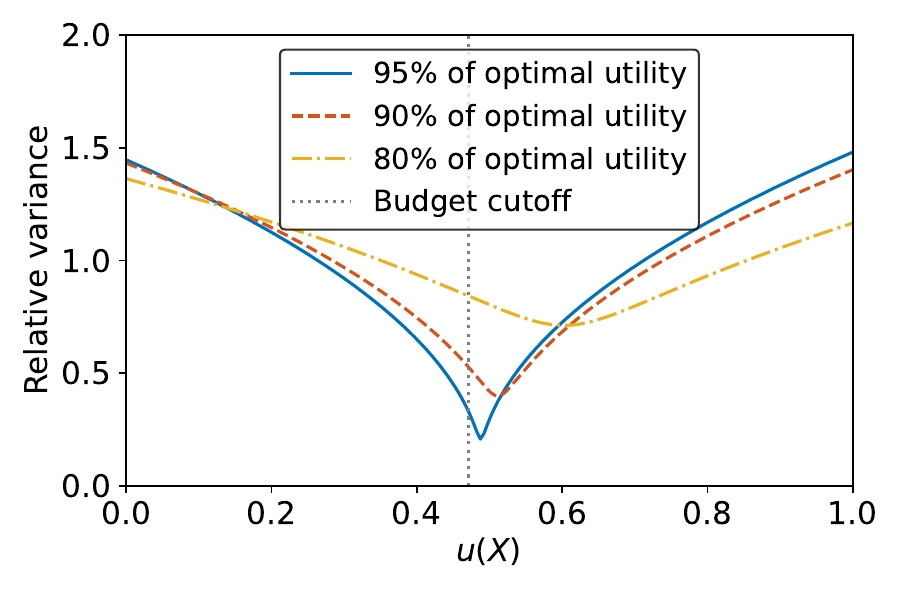}
    \includegraphics[width=2.5in]{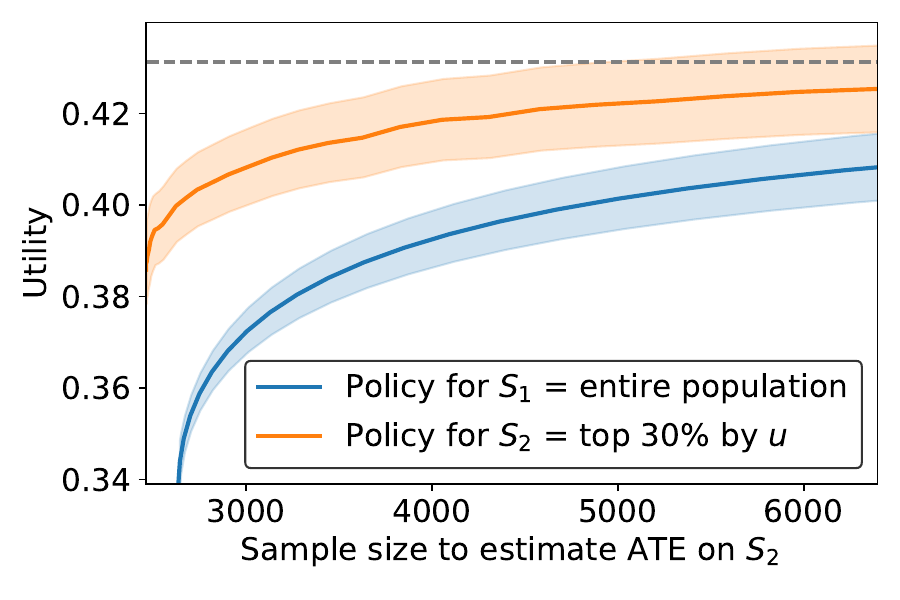}
    \caption{Ability to estimate heterogeneous treatment effects for groups defined by risk strata. Top: housing data. Bottom: reentry data. Left: relative variance of ATE estimation at each level of $u$ compared to the whole-population mean. Right: simulation of a setting where we aim to estimate the ATE for the 30\% highest-risk individuals. The blue curve gives the sample size-utility tradeoff for the original policy which optimizes estimation of the ATE for the whole population, while the orange curve gives the same tradeoff for a policy optimized to estimate the ATE just on the high-risk subgroup.}
    \label{fig:att}
\end{figure}

\paragraph{Statistical power for subgroups} Our results so far have aimed at estimation of the ATE on the entire population. We next turn to the ability to estimate effects for more specific subpopulations. Figure \ref{fig:att} shows the \textit{pointwise} value of the efficiency bound for each of the same optimized policies as a function of the utility $u(X)$. The variance of a group average treatment effect is obtained by integrating this curve over the population of interest so that the whole-population ATE has variance given by the integral over the entire population. We normalize the curve so that the whole-population ATE has value 1, allowing us to interpret the curve as the relative increase or decrease in sampling variance if we tried to estimate the group average treatment effect (GATE) over a particular subset of $\mathcal{X}$. We find that our designs have greatest power to estimate GATEs for levels of the utility function near the budget cutoff, while evidence regarding GATEs at one extreme or another would accrue at a rate of up to 1.5 times slower than the mean. If a policymaker specifically hopes to estimate a particular GATE, e.g. for higher-need individuals, they can modify the objective function as described in Section \ref{section:alternate-estimands}. The right-hand column of Figure \ref{fig:att} shows an example of this, where we simulate the goal of estimating the GATE for the target population $\mathcal{S}_2 = \{x: u(x) \geq \text{quantile}_{1 - b}(u)\}$, denoting the $b$-highest need subgroup. We compare the utility-sample size tradeoff curve for the policy optimized with respect to $\mathcal{S}_2$ compared to the original policy, which optimizes for estimation of the whole-population ATE ($\mathcal{S}_1$). If the policymaker is willing to commit to the more specific estimand, they are able to gain additional utility by forgoing exploration on the remainder of the population. 

% Figure \ref{fig:att} examples the consequences of specifying an alternate estimand instead of the whole-population average treatment effect. Specifically, we examine the case where a policymaker only wishes to know whether an intervention is effective for the highest-need subgroup. We formalize this as in Section \ref{section:alternate-estimands} by comparing estimation for the target population $\mathcal{S}_1 = \mathcal{X}$ (i.e., the entire population) to $\mathcal{S}_2 = \{x: u(x) \geq \text{quantile}_{1 - b}(u)\}$ denotes the $b$-highest risk subgroup. We compute the optimal policies implied by each of these estimands, along with their corresponding utilities and sample size requirements (where the sample size calculation for the policy targeting $\mathcal{S}_2$ incorporates that only a $b$ fraction of the total samples will belong to the target population). The solid lines plot the two curves, and we observe that the utility consequences of the choice of estimand vary depending on the data distribution. We also show, via the dashed line, performance of the optimal policy for the whole population ATE ($\mathcal{S}_1$) when evaluated in terms of the sample size required to estimate the ATE high-need population $\mathcal{S}_2$. Because this policy randomizes over the entire covariate space, such estimates are always possible. Depending on the data distribution however, it maybe possible to gain significantly in terms of the statistical power vs utility tradeoff by specifically optimizing for the $\mathcal{S}_2$ estimand if this is the primary object of interest. 

\begin{figure}
    \centering
    \includegraphics[width=2.5in]{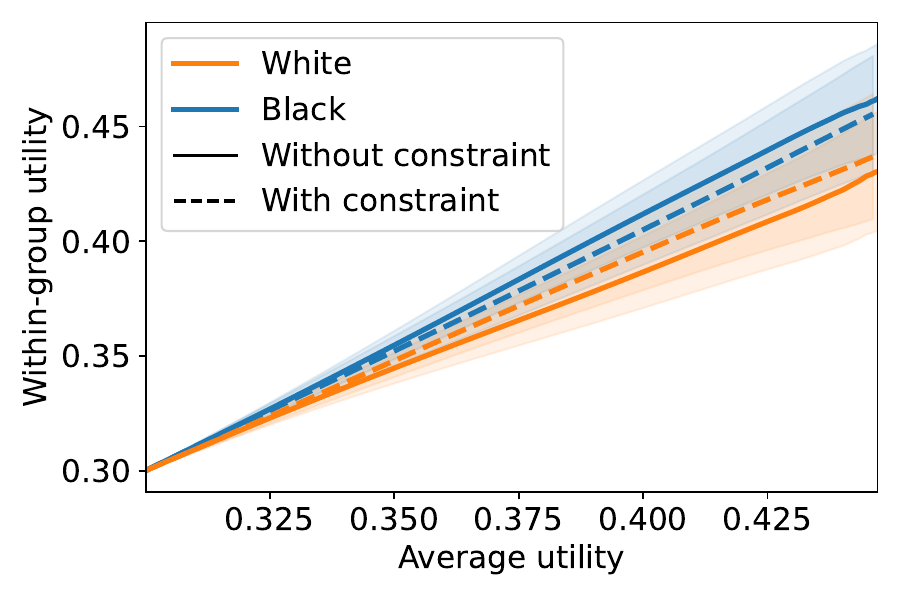}
    \includegraphics[width=2.5in]{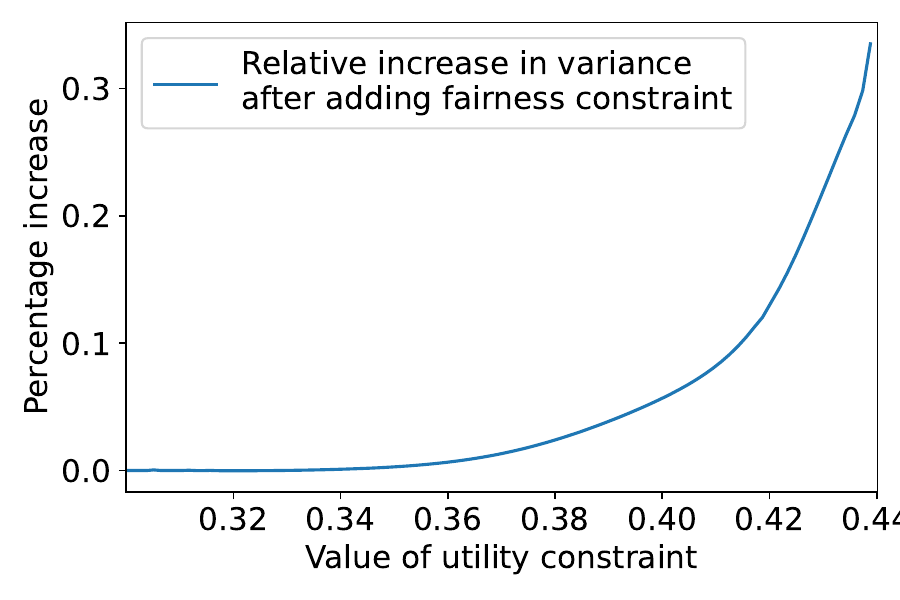}

    \includegraphics[width=2.5in]{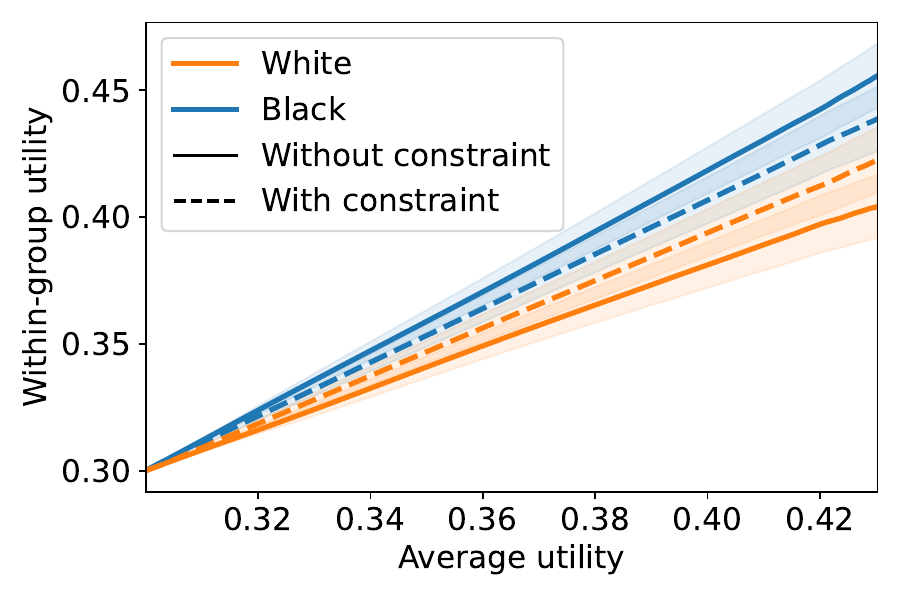}
    \includegraphics[width=2.5in]{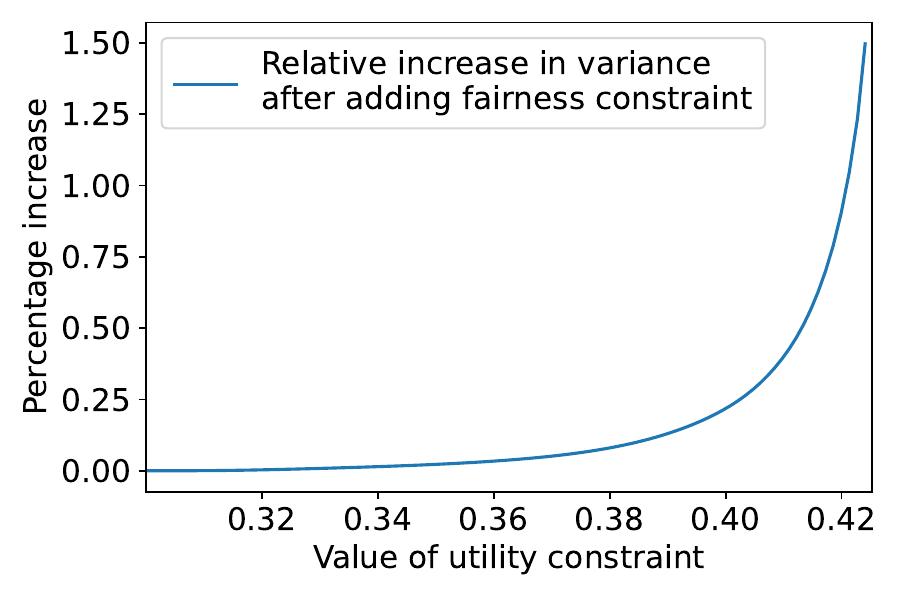}
    \caption{Impact of fairness constraints. Top: housing dataset. Bottom: reentry dataset. The left-hand plot of each row shows the utility of the optimal policies for each group, i.e., the fraction of members of each group with an adverse outcome who receive treatment. The $x$ axis varies the average utility across the entire population (i.e., the value of the utility constraint). Solid lines show policies without fairness constraints, while dashed lines show policies with fairness constraints. The right-hand column shows the percentage increase in variance (i.e., the objective function) that results from imposing fairness constraints, again as a function of the value of the utility constraint.}
    \label{fig:fairness}
\end{figure}

\paragraph{Impact of fairness constraints} Finally, Figure \ref{fig:fairness} shows the consequences of adding fairness constraints (based on Equation \ref{eq:fairness-constraint}) to the optimization problem. We take the groups to consist of White and Black individuals. We observe that the optimal policies without fairness constraints imply slightly higher average utility for Black individuals than White. Imposing fairness constraints narrows the difference, at only a nominal cost in terms of estimation variance. The small cost of imposing fairness constraints is attributable to the fact that the two groups have relatively similar distributions of the utility function in our dataset. In other settings, a larger ``price of fairness" could arise if a decision maker specifies a utility function which takes systematically larger values on one of the groups.   

\begin{figure}
    \centering
    \includegraphics[width=2.5in]{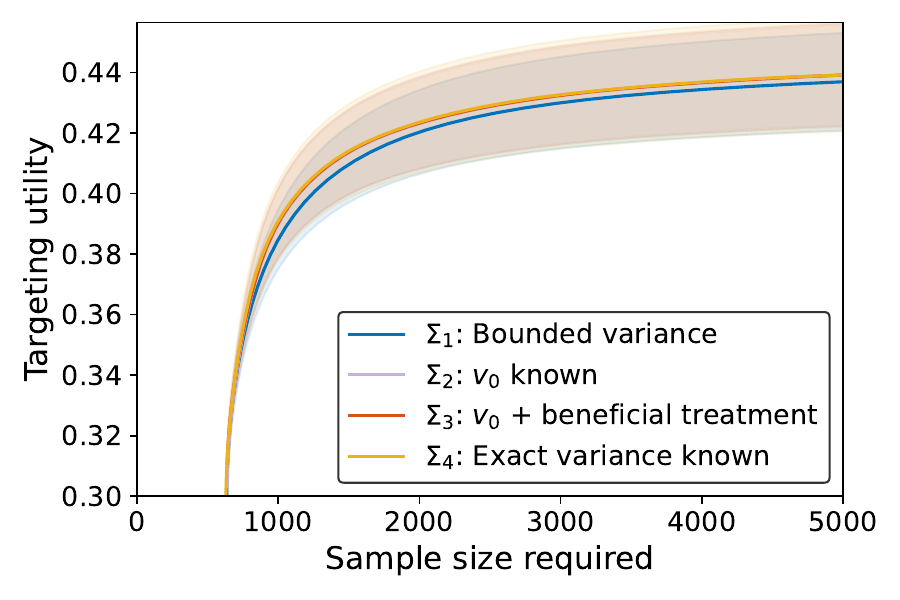}
    \includegraphics[width=2.5in]{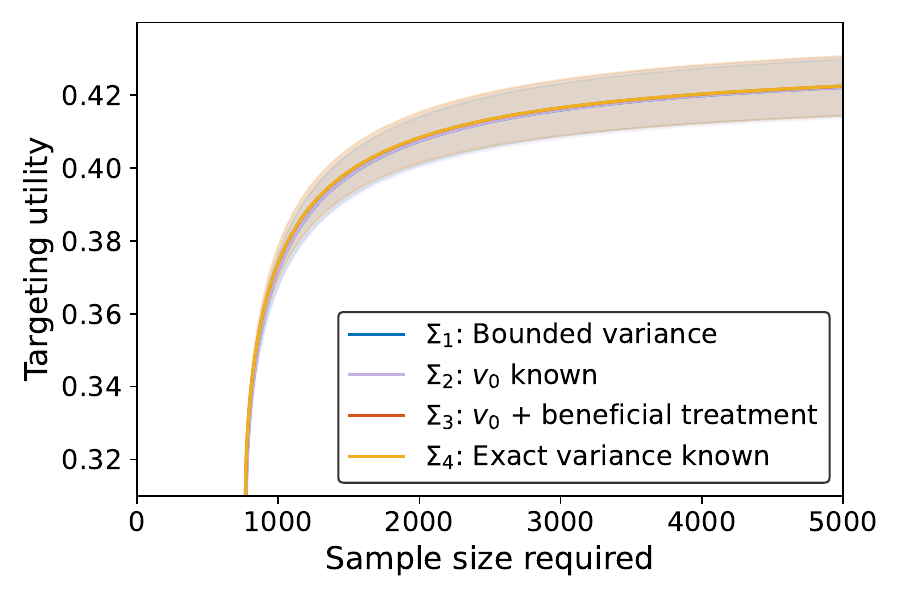}
    \caption{Comparison of the recall-sample size tradeoff for policies optimized under the four uncertainty sets $\Sigma_1$-$\Sigma_4$. Left: housing data. Right: reentry data.} \label{fig:variance-models}
\end{figure}
\paragraph{Impact of knowledge of the outcome variance} So far, our results have all been for optimizing with respect to the most conservative uncertainty set for the outcome variances, which requires no prior knowledge. We now simulate a series of policies that optimize against progressively better-informed uncertainty sets. At one extreme is the default policy used in results so far, which optimizes only under the assumption that the variances are bounded. We refer to this uncertainty set as $\Sigma_1$. Second, an uncertainty set $\Sigma_2$ where we assume that $v_0$ is known (estimated from the same historical data used to learn the risk model) while $v_1$ is assumed to be bounded by $\frac{1}{4}$ (the maximum value for a binary outcome). Third, $\Sigma_3$, a refinement of $\Sigma_2$ where we assume that $\tau(X) \leq 0$ (as holds for the simulated treatment effect) and so impose the additional constraint that $v_1(X) \leq v_0(X)$ whenever $\Pr(Y(0) = 1|X) \leq 0.5$. Finally, $\Sigma_4$, the (unattainable) optimal case where both $v_0$ and $v_1$ are perfectly known and constrained to lie at the true values. Each step from $\Sigma_1$ and $\Sigma_4$ adds progressively stronger (correct) constraints on the outcome variances, allowing us to test whether having greater knowledge of the variances improves performance in practice. 

The results are shown in Figure \ref{fig:variance-models} (simulated with our default settings of $b = 0.3$ and $\beta = 0.1)$. We find that in both datasets, there is almost no gain from having additional information about the outcome variances. In the housing dataset, recall gains are on order of a 0.5\% relative improvement, while there is no discernible impact for the reentry dataset. The gains on the housing dataset are achievable in any of the models where $v_0$ is known ($\Sigma_2$-$\Sigma_4$) compared to $\Sigma_1$, indicating that the ability to estimate the variance of the baseline outcome $Y(0)$ is sufficient to capture most of the gains (at least for this empirical setting). From these results, we conclude that optimal assignment policies are quite robust to lack of knowledge of the outcome variances, at least in application domains like the ones that we study. It is possible that adaptation to the variance structure could be more important in settings with much greater imbalance in variance between arms of the study.

\section{Discussion}

Designing effective interventions in socially critical domains often requires policymakers to confront difficult tradeoffs between competing objectives. Chief among these is the need to balance offering the highest-quality services in the present against the potential to learn more and improve in the future. This paper investigates a specific instantiation of this dilemma, in the form of balancing between treating individuals who are judged to have higher need against the ability to learn about the average treatment effect of the program. We provide algorithms to minimize the error of ATE estimates subject to a constraint on the expected utility of the policy in treating high-need individuals.  We find that by explicitly optimizing the tradeoff between these competing goals, there is substantial room to navigate between the all-or-nothing extremes of standard RCTs and targeting purely based on estimated need. 

In the future, such issues may grow in importance due to the increasing use of predictive models in policy, healthcare, and social service settings. These models are typically trained to classify baseline outcomes as accurately as possible. The availability of such a model creates a strong impetus to allocate treatment to the individuals deemed at highest risk. However, our results can be seen as a caution that pure predictive accuracy is not the only salient goal for such systems: giving up a marginal amount of alignment between allocations and risk prediction can create large benefits in terms of other objectives for program design (in our case, rigorous evaluation). In this sense, our results dovetail with \citet{jain2024scarce}'s normative argument that resource allocations based on machine learning predictions should be randomized. Our approach can be seen as giving the most effective way to implement such randomization when the goal is to control the tradeoff between utility and estimating treatment effects. 

While risk prediction can help improve the efficiency with which limited resources are used, prediction is ultimately only one component of successful interventions. Evaluations of overall program effectiveness are highly valuable to guide larger-scale strategic decisions about how to design program offerings in the first place. Indeed, such higher-level decisions may be more consequential for overall impact than refining individual-level targeting decisions \citep{perdomo2024relative}. A broader conclusion from our work is that the design of algorithmic systems for resource allocation should explicitly account for such goals beyond predictive accuracy. Constrained optimization frameworks like ours provide a means to build in additional objectives which reflect the role that predictive models play in a larger system. Accounting for this larger picture can help algorithm design play a more fruitful role in the provision of public services.

\begin{acks}
BW acknowledges support from the AI2050 program at \grantsponsor{G-22-64474}{Schmidt Sciences}{} (Grant \#\grantnum{G-22-64474}{G-22-64474}) and the AI Research Institutes Program funded by the \grantsponsor{2229881}{National Science Foundation}{} under AI Institute for Societal Decision Making (AI-SDM), Award No. \grantnum{2229881}{2229881}. BW also thanks participants at the BIRS Workshop on Bridging Prediction and Intervention Problems in Social Systems for fruitful conversations.
\end{acks}

% Bibliography
\bibliographystyle{ACM-Reference-Format}
\bibliography{refs}

% Appendix
\appendix

\section{Proofs}

\begin{proof}[Proof of Proposition \ref{prop:equal-minmax}]
    Fix an policy $p \in \mathcal{P}$. Since the efficiency bound is monotone in $v_0(X)$ and $v_1(X)$ for any $X$, we have that 
    \begin{align*}
        \max_{v_0, v_1 \in \Sigma_{\infty, C}} \E\left[\frac{v_1(X)}{p(X)} + \frac{v_0(X)}{1- p(X)}\right] = \E\left[\frac{C}{p(X)} + \frac{C}{1- p(X)}\right].
    \end{align*}
    That is, the functions $v_0, v_1$ achieving the max are $v_0(X) = v_1(X) = C$ regardless of $p$. Since $C > 0$ is constant with respect to $X$, it does not affect the maximization. I.e., if $p$ is an optimal solution to the simplified problem that satisfies
    \begin{align*}
         \E\left[\frac{1}{p(X)} + \frac{1}{1- p(X)}\right] \leq \E\left[\frac{1}{q(X)} + \frac{1}{1- q(X)}\right] \,\,\, \forall q \in \mathcal{P}
    \end{align*}
    then it must hold that 
    \begin{align*}
         \E\left[\frac{C}{p(X)} + \frac{C}{1- p(X)}\right] \leq \E\left[\frac{C}{q(X)} + \frac{C}{1- q(X)}\right] \,\,\, \forall q \in \mathcal{P}
    \end{align*}
    and so $p$ is optimal for the original minmax problem as well. 
\end{proof}

\begin{proof}[Proof of Proposition \ref{prop:estimated-minmax}]
    The proof follows by similar logic as Proposition 1: since the efficiency bound is monotone in $v_0(X)$ and $v_1(X)$ for any $X$, the max is achieved when both functions take their largest possible value, i.e.,  
    \begin{align*}
        \max_{v_0, v_1 \in \Sigma_{a}} \E\left[\frac{v_1(X)}{p(X)} + \frac{v_0(X)}{1- p(X)}\right] = \E\left[\frac{a(X)\hat{v}_0(X)}{p(X)} + \frac{\hat{v}_0(X)}{1- p(X)}\right].
    \end{align*}
    and so maximizing the later function is equivalent to maximizing the former.
\end{proof}

Next, we turn to proving the main sample complexity results. We first introduce some additional notation. Define $f(p(X)) = \frac{C(X)}{p(X)} + \frac{1}{1- p(X)}$. 
Define $p(X, \lambda)$ to be the optimal $p$ with fixed dual parameters $\lambda$, i.e., the solution to Equation \ref{eq-optimal-p}. Define $h_j(\lambda) = \E[g(p(X, \lambda), X)]$ and $h_j^n(\lambda) = \frac{1}{n}\sum_{i = 1}^n g(p(X_i, \lambda), X_i)$. 

\begin{lemma}
    $\frac{1}{n}\sum_{i = 1}^n f(p(X_i))$ is 32$a_{\text{min}}$-strongly convex in $p(X_1)...p(X_n)$. \label{lemma:strict-convex}
\end{lemma}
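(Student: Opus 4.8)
The plan is to exploit separability. The function $\frac1n\sum_{i=1}^n f(p(X_i))$, with $f(t)=\frac1t+\frac1{1-t}$, is a sum of terms each depending on a single coordinate $p(X_i)$, so its Hessian with respect to $(p(X_1),\dots,p(X_n))$ is diagonal, with $i$-th entry $\frac1n f''(p(X_i))$. Strong convexity of the sum therefore reduces to a uniform lower bound on $f''$ over the admissible interval $[\gamma,1-\gamma]$ — indeed over all of $(0,1)$.

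Concretely, I would proceed in three steps. First, compute $f'(t)=-\frac1{t^2}+\frac1{(1-t)^2}$ and $f''(t)=\frac2{t^3}+\frac2{(1-t)^3}$, which is positive on $(0,1)$. Second, obtain the sharp constant: since $t\mapsto t^{-3}$ is convex on $(0,1)$, Jensen applied to the two points $t$ and $1-t$ gives
\[
\frac1{t^3}+\frac1{(1-t)^3}\;\ge\;2\left(\frac{t+(1-t)}{2}\right)^{-3}=2\cdot 8=16,
\]
hence $f''(t)\ge 32$ for every $t\in(0,1)$, with equality at $t=\tfrac12$ (one can double-check this is the minimizer via $f'''(t)=-6t^{-4}+6(1-t)^{-4}$, which vanishes only at $t=\tfrac12$). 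Third, conclude that the diagonal Hessian of $\frac1n\sum_i f(p(X_i))$ dominates $\frac{32}{n}I$ entrywise; measuring lengths in the empirical norm $\|v\|_n^2:=\frac1n\sum_i v_i^2$ that is natural here, this is precisely the statement that $\frac1n\sum_i f(p(X_i))$ is $32$-strongly convex (equivalently, $\sum_i f(p(X_i))$ is $32$-strongly convex in the ordinary Euclidean norm).

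There is no real obstacle: the whole argument is a one-dimensional second-derivative estimate combined with the observation that separability makes the Hessian diagonal. The only point requiring care is the bookkeeping around the $\frac1n$ factor — the constant $32$ is relative to the empirical $L^2$ norm — and ensuring this is the convention under which Lemma~\ref{lemma:strict-convex} is later invoked when deriving the optimization-error and dual-stability bounds that feed into Proposition~\ref{prop-sample-complexity}.
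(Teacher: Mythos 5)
Your proof is correct and takes essentially the same approach as the paper: compute $f''(t)=\tfrac{2}{t^3}+\tfrac{2}{(1-t)^3}\ge 32$ and use separability to get a diagonal Hessian bounded below entrywise. You are in fact slightly more careful than the paper on two points it leaves implicit --- the Jensen argument justifying the constant $32$, and the observation that with the $\tfrac1n$ prefactor the constant should be read relative to the empirical $L^2$ norm (equivalently, the unnormalized sum is $32$-strongly convex in the Euclidean norm).
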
 
\begin{proof}
    Taking second derivatives, we have that 
    \begin{align*}
        \frac{\partial^2 f}{\partial p(X_i)^2}  = \frac{2a_0(X)}{(1 - p(X_i))^3 } + \frac{2a_1(X)}{p(X_i)^3 } \geq 2a_{\text{min}}\left(\frac{1}{(1 - p(X_i))^3 } + \frac{1}{p(X_i)^3 }\right) \geq 32a_{\text{min}}
    \end{align*}
    where the second inequality follows because the expression in parentheses is minimized at $p(X) = \frac{1}{2}$. Since the summation is separable, its Hessian is a diagonal matrix with entries lower bounded by 32$a_{\text{min}}$ for $a_{\text{min}} > 0$, implying that $f$ must be strongly convex. 
\end{proof}

In what follows below, we let $L = 32 a_{\text{min}}$ denote the strong convexity constant of $f$.

\begin{lemma}
    For all $j$, $h_j$ and $h_j^n$ are both $\frac{1}{L}-$Lipschitz in the $\ell_1$ norm.   \label{lemma:constraints-smooth}
\end{lemma}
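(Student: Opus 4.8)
The plan is to reduce the claim to a single pointwise stability estimate: for every fixed $X$, the inner minimizer $\lambda \mapsto p(X,\lambda)$ is $\tfrac1{32}$-Lipschitz in $\lambda$ with respect to $\|\cdot\|_1$. Granting this, the lemma follows at once. Indeed, since $g_j(\cdot, X)$ is $1$-Lipschitz in its first argument by assumption, we get $|g_j(p(X,\lambda),X) - g_j(p(X,\lambda'),X)| \le |p(X,\lambda) - p(X,\lambda')| \le \tfrac1{32}\|\lambda-\lambda'\|_1$ for each $X$; taking expectations over $X\sim\PP$ and applying Jensen's inequality gives $|h_j(\lambda)-h_j(\lambda')| \le \tfrac1{32}\|\lambda-\lambda'\|_1$, and averaging over $X_1,\dots,X_n$ instead gives the same bound for $h_j^n$.

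To prove the pointwise stability estimate, fix $X$ and write $\phi_\lambda(q) = f(q) + \sum_{j=1}^J \lambda_j(g_j(q,X) - c_j)$ on $K = [\gamma,1-\gamma]$, so that $p(X,\lambda) = \text{argmin}_{q\in K}\phi_\lambda(q)$. By Lemma \ref{lemma:strict-convex}, $f$ is $32$-strongly convex; since each $g_j(\cdot,X)$ is convex (this is precisely the condition that makes the feasible set convex, and holds for every example constraint) and $\lambda \ge 0$, the function $\phi_\lambda$ is $32$-strongly convex on $K$ as well. Write $p = p(X,\lambda)$ and $p' = p(X,\lambda')$. Using the constrained first-order optimality conditions for the two problems, $\phi_\lambda'(p)(p'-p)\ge 0$ and $\phi_{\lambda'}'(p')(p-p')\ge 0$, adding them gives $(\phi_\lambda'(p) - \phi_{\lambda'}'(p'))(p'-p) \ge 0$; adding and subtracting $\phi_{\lambda'}'(p)$ and using monotonicity of the derivative of the $32$-strongly convex $\phi_{\lambda'}$ (which gives $(\phi_{\lambda'}'(p)-\phi_{\lambda'}'(p'))(p'-p)\le -32(p'-p)^2$) yields $32(p'-p)^2 \le (\phi_\lambda'(p) - \phi_{\lambda'}'(p))(p'-p)$. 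Finally, $\phi_\lambda'(p) - \phi_{\lambda'}'(p) = \sum_j (\lambda_j-\lambda_j')\,\partial_q g_j(p,X)$ and $|\partial_q g_j(p,X)| \le 1$ since $g_j(\cdot,X)$ is $1$-Lipschitz, so the right side is at most $\|\lambda-\lambda'\|_1\,|p'-p|$; dividing by $|p'-p|$ gives $|p(X,\lambda)-p(X,\lambda')| \le \tfrac1{32}\|\lambda-\lambda'\|_1$. (If the $g_j(\cdot,X)$ are not differentiable, the same argument runs with subgradients and the corresponding variational inequalities.)

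The only substantive step is this inner stability bound; everything else is bookkeeping with the normalizations already imposed on the $g_j$. Within that step, two points need care: first, we genuinely use convexity of each $g_j(\cdot, X)$, not merely Lipschitzness, so that $\phi_\lambda$ inherits the $32$-strong convexity of $f$; second, one must use the \emph{constrained} first-order conditions over $K$ rather than unconstrained stationarity, since the optimal $p(X,\lambda)$ may lie on the boundary $\{\gamma,1-\gamma\}$. Neither is a real obstacle, but these are the places where a careless argument would break.
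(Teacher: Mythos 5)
Your proof is correct, and it follows the same overall skeleton as the paper's: establish the pointwise stability bound $|p(X,\lambda)-p(X,\lambda')|\le\tfrac1{32}\|\lambda-\lambda'\|_1$, compose with the $1$-Lipschitzness of $g_j(\cdot,X)$, and average over $X$. Where you differ is in how the key stability step is proved. The paper applies the implicit function theorem to the interior first-order condition $m(p,\lambda)=0$, bounding $\partial p/\partial\lambda_j$ by the ratio of $|\partial m/\partial\lambda_j|\le 1$ to $\partial m/\partial p\ge 32$, and then handles the boundary $\{\gamma,1-\gamma\}$ by a separate case analysis (the derivative is zero where $p(X,\lambda)$ is pinned at the boundary and undefined only on a measure-zero set of transition points). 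Your variational-inequality argument — adding the two constrained optimality conditions and invoking $32$-strong monotonicity of $\phi_{\lambda'}'$ — reaches the same $1/32$ constant without ever differentiating $\lambda\mapsto p(X,\lambda)$, so it treats interior and boundary minimizers uniformly and extends to nonsmooth $g_j$ via subgradients; this is arguably cleaner at the points where the minimizer transitions on or off the boundary, which is the delicate spot in the paper's version. Both arguments share the same implicit hypothesis, which you are right to flag explicitly: one needs each $g_j(\cdot,X)$ convex (so that $\phi_\lambda$ inherits the $32$-strong convexity of $f$ for $\lambda\ge 0$, or equivalently so that $\partial m/\partial p\ge 32$ in the paper's computation). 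The paper states only Lipschitzness and normalization of the $g_j$, but its own bound on $\partial m/\partial p$ uses this convexity just as yours does, and it holds for all the (linear) example constraints.
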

\begin{proof}
     Within any range where the optimal $p(X, \lambda)$ is on the boundary of the feasible set, $p(X, \lambda)$ does not depend on the dual variables. If the optimal solution is not on the boundary, it is characterized equivalently by the solution to the first-order condition 
    \begin{align*}
        m(p(X), \lambda) \triangleq -\frac{C(X)}{p(X)^2} + \frac{1}{(1 - p(X))^2} + \sum_{j = 1}^J \nabla g_j(p(X), X) \lambda_j  = 0.
    \end{align*}
    % The first-order condition has a solution in the interval $[\gamma, 1 - \gamma]$ whenever 
    % \begin{align*}
    %    \frac{1}{\nabla g_j(p(X), \lambda)} \left(-\frac{1}{\gamma^2} + \frac{1}{(1 - \gamma)^2} - \sum_{k \neq k} \nabla g_k(p(X), X) \lambda_k \right)
    % \end{align*}

    We will bound the derivative $\frac{\partial p(X, \lambda)}{\partial \lambda_j}$ for the set of $\lambda$ where $p(X, \lambda) \in (\gamma, 1-\gamma)$.  When $p(X, \lambda) \in \{\gamma, 1-\gamma\}$, $\frac{\partial p(X, \lambda)}{\partial \lambda_j}$ is either undefined (on the measure zero set of boundary points where $p(X, \lambda)$ is not differentiable) or else $\frac{\partial p(X, \lambda)}{\partial \lambda_j} = 0$ (over regions where  $p(X, \lambda)$ is constant at either $\gamma$ or $1 - \gamma$ and hence does not depend on $\lambda$). Accordingly, to bound the Lipschitz constant, it suffices to bound $\frac{\partial p(X, \lambda)}{\partial \lambda_j}$ just over the region where $p(X, \lambda)$ is given by the solution to the first order condition. Within this region,  applying the implicit function theorem to the first-order condition yields that 
    \begin{align*}
        \frac{\partial p(X, \lambda)}{\partial \lambda_j} = -\frac{\partial m(p(X), \lambda)}{\partial p(X)}^{-1} \frac{\partial m(p(X), \lambda)}{\partial \lambda_j}
    \end{align*}
    where we can bound $\frac{\partial m}{\partial p(X)} \geq L$ (via Lemma \ref{lemma:strict-convex}) and $\left\lvert \frac{\partial m}{\partial \lambda_j} \right\rvert = |\nabla g_j(p(X), X)| \leq 1$ by assumption that $g_j$ is Lipschitz. Combining, we obtain 
    \begin{align*}
        \left\lvert \frac{\partial p(X, \lambda)}{\partial \lambda_j} \right\rvert \leq \frac{1}{L} \quad \forall j = 1...J.
    \end{align*}
    Using again the assumption that the $g_j$ are 1-Lipschitz in $p$, we have that 
    \begin{align*}
       \left\lvert \frac{\partial g_j(p(X, \lambda))}{\partial \lambda_j} \right\rvert \leq  \left\lvert \frac{\partial g_j(p(X, \lambda))}{\partial p(X, \lambda)} \right\rvert \left\lvert \frac{\partial p(X, \lambda)}{\partial \lambda_j} \right\rvert \leq \frac{1}{L}
    \end{align*}
    and so we have proved that for each $g_j$, $||\nabla_\lambda g_j(p(X, \lambda))||_\infty \leq \frac{1}{L}$. Taking expectations over $\PP$ and the empirical distribution respectively yields that $||\nabla_\lambda h_j(\lambda))||_\infty \leq \frac{1}{L}$ and $||\nabla_\lambda h_j^n(\lambda))||_\infty \leq \frac{1}{L}$. Finally, the conclusion in the lemma follows because the $\ell_\infty$ norm is dual to the $\ell_1$.   
\end{proof}
\begin{proof}[Proof of Proposition \ref{prop-sample-complexity}]
We start by proving that the expectations in the constraints are well-approximated with high probability. The main idea is to discretize the range of the dual variables, since by assumption $\widehat{\lambda} \in [0, d]^J$ with probability at least $1 - \delta_1$. Let $N([0, d]^J, \epsilon, ||\cdot||_1)$ denote the $\ell_1$ covering number of the set $[0, d]^J$. Let $B_1$ be the $\ell_1$ unit ball in $R^J$. Standard bounds on covering numbers (e.g., \citet{wainwright2019high}, Lemma 5.7) imply that 
$N([0, d]^J, \epsilon, ||\cdot||_1) \leq \left(1 + \frac{2}{\epsilon}\right)^J \frac{\text{vol}([0, d]^J)}{\text{vol}(B_1)}$. Since $\text{vol}(B_1) = \frac{2^J}{J!}$ and $\text{vol}([0,d]^J) = d^J$, we obtain that 
\begin{align*}
    \log N([0, d]^J, \epsilon, ||\cdot||_1) \leq J \log \left(1+ \frac{2}{\epsilon}\right) + \log J! + J \log \frac{d}{2} = \Theta\left(J\left(\log \frac{1}{\epsilon} + \log J + \log d\right)\right). 
\end{align*}
We fix a $\epsilon$-covering $\Lambda$ of at most this size. We have that each constraint function $h_j$ is $\frac{1}{2}-$Lipschitz in the $\ell_1$ norm. Accordingly, we condition on the event that 
\begin{align}
    |h_j^n(\lambda) - h_j(\lambda)| \leq \epsilon \quad \forall \lambda \in \Lambda, j = 1...J \label{concentration-grid}
\end{align}
which by a standard Hoeffding bound occurs with probability at least $1 - 2\exp(-2n\epsilon^2)$ for each $(\lambda, J)$ individually, and hence with probability at least $1 - 2J |\Lambda|\exp(-2n\epsilon^2)$ in total via union bound. We also condition on the event that $||\widehat{\lambda}||_\infty \leq d$, which occurs with probability at least $1 - \delta_1$ by assumption. Since $\widehat{\lambda} \geq 0$ by dual feasibility, there must be a point $\lambda' \in \Lambda$ with $||\widehat{\lambda} - \lambda'||_1 \leq \epsilon$.  We have via triangle inequality that 
\begin{align*}
    |h^n_j(\widehat{\lambda}) -  h_j(\widehat{\lambda})| \leq  |h^n_j(\widehat{\lambda}) -  h^n_j(\lambda')| + | h^n_j(\lambda') - h_j(\lambda')| + |h_j(\lambda') - h_j(\widehat{\lambda})|.
\end{align*}
Since $||\widehat{\lambda} - \lambda'||_1 \leq \epsilon$, Lemma \ref{lemma:constraints-smooth} implies that both the first and third terms are at most $\epsilon L$. Event \ref{concentration-grid} implies that the third term is at most $\epsilon$. Summing up, we have that 
\begin{align}
    |h^n_j(\widehat{\lambda}) -  h_j(\widehat{\lambda})| \leq (2L + 1) \epsilon \quad \forall j = 1...J \label{eq-near-feasible}
\end{align}
with probability at least $1 - \delta_1 - 2J |\Lambda|\exp(-2n\epsilon^2)$. Conditional on this event, 
\begin{align*}
    h_j(\widehat{\lambda}) \leq h^n_j(\widehat{\lambda}) + (2L + 1)\epsilon \leq c_j + (2L + 1)\epsilon
\end{align*}
where the last line uses primal feasibility for the sample problem. 

Next, we turn to showing that the objective function value is near-optimal. Essentially, the challenge is that if the sample dual variables cause errors in the other direction, making the constraints overly conservative, this may result in a worse value of the objective function. Writing out the population dual objective function and using optimality of $\lambda^*$ for the maximization, we have that 
    \begin{align*}
        \left(\E[f(\widehat{p}(X)] + \sum_j (\E[g_j(\widehat{p}(X), X)] - c_j) \widehat{\lambda}_j \right)- \left(\E[f(p^*(X)] + \sum_j (\E[g_j(p^*(X), X)] - c_j) \lambda^*_j \right) \leq 0.
    \end{align*}
    Complementary slackness for the population problem implies that $\sum_j (\E[g_j(p^*(X), X)] - c_j) \lambda^*_j = 0$, so we have
    \begin{align*}
        \E[f(\widehat{p}(X)]   - \E[f(p^*(X)]  \leq \sum_j -(\E[g_j(\widehat{p}(X), X)] - c_j) \widehat{\lambda}_j.
    \end{align*}
    Now, for each constraint $j$, we consider two cases. Intuitively, these divide whether there is a large or small amount of slack in the $j$th constraint. If there is a small amount of slack, this cannot hurt the objective value too much because the dual variables (and hence the ``shadow price" for the constraint) are bounded. If there is a large amount slack, then this implies that there is also slack in the sample problem and so leaving slack must not have hurt the objective value. More formally, the first case is that $\E[g_j(\widehat{p}(X), X)] - c_j \geq - (2L+1)\epsilon$. Since $\widehat{\lambda}_j \geq 0$ this implies that 
    \begin{align*}
        -(\E[g_j(\widehat{p}(X), X)] - c_j) \widehat{\lambda}_j \leq (2L+1)\epsilon \cdot \widehat{\lambda}_j. 
    \end{align*}
    The second case is that $\E[g_j(\widehat{p}(X), X)] - c_j < -(2L+1)\epsilon$. Using Equation \ref{eq-near-feasible}, we have that 
    \begin{align*}
        \frac{1}{n}\sum_{i = 1}^n g_j(\widehat{p}(X_i), X_i) - c_j = h^n_j(\widehat{\lambda}) - c_j < h_j(\widehat{\lambda}) - c_j + (2L+1)\epsilon < 0.
    \end{align*}
  In this case, complementary slackness for the sample problem implies that $\widehat{\lambda}_j = 0$ and so $(\E[g_j(\widehat{p}(X), X)] - c_j) \widehat{\lambda}_j = 0$. Combining these cases and summing over the two constraints, we conclude that 
    \begin{align*}
        \E[f(\widehat{p}(X)]   - \E[f(p^*(X)]  \leq (2L+1)J \epsilon \cdot ||\widehat{\lambda}_j||_\infty \leq  (2L+1)J d\epsilon.
    \end{align*}
Rescaling $\epsilon$ and setting the failure probability for event \ref{concentration-grid} in order to obtain the guarantees in the theorem statement, we require 
\begin{align*}
    n = O\left(\frac{J^2d^2L^2}{\epsilon^2}\left(\log\frac{J}{\delta_2} + J \log \frac{Jd}{\epsilon}\right)\right) = O\left(\frac{J^3d^2}{\epsilon^2a_{\text{min}}^2}\log\frac{Jd}{\epsilon\delta_2} \right).
\end{align*}
\end{proof}

\begin{proof}[Proof of Proposition \ref{corollary-small-ball}]
Strong convexity of the objective function implies that there are a unique set of sample optimal dual parameters $\widehat{\lambda}$. The duals must solve the stationarity condition 
\begin{align*}
    \sum_{j = 1}^J \lambda_j \nabla_{p(X_i)} \left(\frac{1}{n}\sum_{k = 1}^n g_j(p(X_k), X_k)\right) = \nabla_{p(X_i)} \left(\frac{1}{n}\sum_{k = 1}^n f(p(X_k))\right) \quad \forall i = 1...n
\end{align*}
which, using separability of the objective and constraints, reduces to 
\begin{align*}
    \sum_{j = 1}^J \lambda_j \nabla_{p(X_i)}  g_j(p(X_i), X_i) = \nabla_{p(X_i)} f(p(X_i)) \quad \forall i = 1...n.
\end{align*}
Let $G$ denote the matrix 
\begin{align*}
   G =  \begin{bmatrix}
           g(X_1) \\
           g(X_2) \\
           \vdots \\
           g(X_n)
    \end{bmatrix} 
\end{align*}
where $g(X_i)$ was defined earlier to be the vector of coefficients for the linear constraints $[g_1(X_i)...g_J(X_i)]$. Note that each row is a iid random variable since we assume that the $X_i$ are iid. Finally, let $F$ denote the right-hand side vector 
\begin{align*}
   F =  \begin{bmatrix}
           \nabla_{p(X_1)} f(p(X_1)) \\
           \nabla_{p(X_2)} f(p(X_2)) \\
           \vdots \\
           \nabla_{p(X_n)} f(p(X_n))
    \end{bmatrix} 
\end{align*}
We can now characterize the duals as the solution of the linear system
\begin{align*}
    G \widehat{\lambda} = F.
\end{align*}
We start by noting that under the small-ball condition 
\begin{align*}
    \inf_{||v||_2 \leq 1}\Pr\left(|v^T \nabla g(X_i)| \geq \alpha\right) \geq \beta
\end{align*}
we have via \citet{lecue2017sparse} (using the statement of the result in \citep{yaskov2016controlling}) 
\begin{align*}
    \Pr\left(\sigma_{\text{min}}(n^{-1} G^T G) \geq \frac{\alpha^2 \beta}{2}\right) \geq 1 - \Theta(\exp(-\beta^2 n)).
\end{align*}
where $\sigma_{\text{min}}(\cdot)$ denotes the smallest singular value of a matrix. We condition on this event in what follows. We apply the singular value decomposition $G = U \Sigma V^T$. Let $\sigma_1...\sigma_p$ be the singular values of $G$. The singular values of $G^T G$, which are controlled under the small ball condition, are $\sigma_1^2...\sigma_p^2$. We have that 
\begin{align*}
    \min_{i = 1...p} \sigma_i^2 \geq n \frac{\alpha^2 \beta}{2}.
\end{align*}
Since all of the singular values are bounded away from 0, $G$ has full column rank and we can write the solution to the linear system defining $\widehat{\lambda}$ as 
\begin{align*}
    \widehat{\lambda} = (G^TG)^{-1}G^TF.
\end{align*}
 Plugging in the SVD gives the usual form for the pseudoinverse 
\begin{align*}
    (G^TG)^{-1}G^T = V \Sigma^+U^T
\end{align*}
where $\Sigma^+$ is the pseudoinverse of $\Sigma$. Accordingly, the singular values of $(G^TG)^{-1}G^T$ are simply $\sigma_1^{-1}...\sigma_p^{-1}$. Since we have that 
\begin{align*}
    \max_{i = 1...p} \sigma^{-1}_i \leq \frac{\sqrt{2}}{\alpha \sqrt{\beta n}}.
\end{align*}
this in turn bounds the operator norm of $(G^TG)^{-1}G^T$ so that
\begin{align*}
    ||\widehat{\lambda}||_2 &\leq ||(G^TG)^{-1}G^T||_2 ||F||_2\\
    &\leq \frac{\sqrt{2}}{\alpha \sqrt{\beta n}} ||F||_2 \\
    &\leq \frac{\sqrt{2}}{\alpha \sqrt{\beta}} ||F||_\infty.
\end{align*}
Finally, a simple calculation shows that $||F||_\infty \leq \max_{p(X) \in [\gamma, 1-\gamma]} |\nabla f(p(X))| \leq \frac{a_{\text{max}}}{\gamma^2}$. Putting it all together, we have that 
\begin{align*}
    ||\widehat{\lambda}||_\infty &\leq ||\widehat{\lambda}||_2 \leq \frac{\sqrt{2}a_{\text{max}}}{\alpha \sqrt{\beta} \gamma^2}
\end{align*}
and the result follows by substituting this bound into Proposition \ref{prop-sample-complexity}.
\end{proof}

\begin{proof}[Proof of Proposition \ref{prop:reverse-markov}]
    Since we have by assumption that $u(X) \in [0,1]$, the reverse Markov inequality for bounded random variables, i.e., applying the Markov inequality to the nonnegative random variable $1 - u(X)$, gives that 
    \begin{align*}
        \Pr\left(u(X) \geq \alpha\right) \geq \frac{\E[u(X)] - \alpha}{1- \alpha}
    \end{align*}
    for any $\alpha < \E[u(X)]$. In the setting of Problem \ref{problem:main}, $g(X) = [u(X), 1]$ and so since $u(X) \in [0,1]$, the minimizing value of $\nu$ in the small condition is always $\nu = [1, 0]$ and $g(X)^T\nu = u(X)$. Therefore, we have that
    \begin{align*}
        \inf_{||\nu||_2 = 1} \Pr\left(|g(X)^T\nu| \geq \alpha\right) = \Pr\left(u(X) \geq \alpha\right)
    \end{align*}
    and so the small ball condition reduces to verifying the tail of $u(X)$ as analyzed above. 
\end{proof}

\section{Additional experimental results}

\paragraph{Sensitivity analysis to varrying treatment effects}

In Figure \ref{fig:sensitivity-beta} we present versions of our main empirical results that vary the hypothetical treatment effects. We fix the budget $b = 0.3$ and present results for $\beta = 0.05$ and $\beta = 0.15$ (compared to our main results for $\beta = 0.1$). The main conclusions are unchanged: the tradeoff curve between utility and sample size is highly concave, with most of the optimal utility being obtainable with relatively little sample size inflation. As expected, the absolute number of samples required for all designs (including the ideal RCT) increase when the treatment effect is small, but the relative differences between the alternate assignment policies are unchanged.

\paragraph{Sensitivity to treatment effect heterogeneity} Finally, we test the extent to which our results vary depending on the structure of heterogeneity present in the simulated treatment effects. We emphasize that our policy optimization process does not use any information about treatment effects and will permit unbiased estimation of such effects regardless of what the true structure is. The pattern of heterogeneity only potentially impacts the efficiency of estimation for data collected under the policy. The primary model used in our analysis so far has treatment effects increasing with $\Pr(Y(0) =1|X)$, which is typically the hope of service providers (people with higher need benefit more). Here, we simulate two other possibilities to test sensitivity to this model. First, where treatment effects have a "U-shaped" curve, with highest treatment effects for those with middling values of $\Pr(Y(0) = 1|X)$. We construct this by setting $\tau$ as the unique quadratic function of $\Pr(Y(0) = 1|X)$ which takes value 0 at $\Pr(Y(0) = 1|X) = 0$ or $\Pr(Y(0) = 1|X) = 1$, peaks at $\Pr(Y(0) = 1|X) = 0.5$, and produces an average treatment effect of $0.1$. Second, where treatment effects are (near) constant, produced by setting $\tau(X) = \min\{0.1, \Pr(Y(0) = 1|X)\}$ (i.e., a constant treatment effect of 0.1, unless capped so that $\Pr(Y(1) = 1|X)$ is nonnegative). The results are shown in Figure \ref{fig:te-models}. We find that results are almost unchanged compared to before, indicating that these forms of potential heterogeneity in treatment effects do not significantly alter the utility-sample size Pareto frontier produced by our method.

\begin{figure}
    \centering
    \includegraphics[width=2.5in]{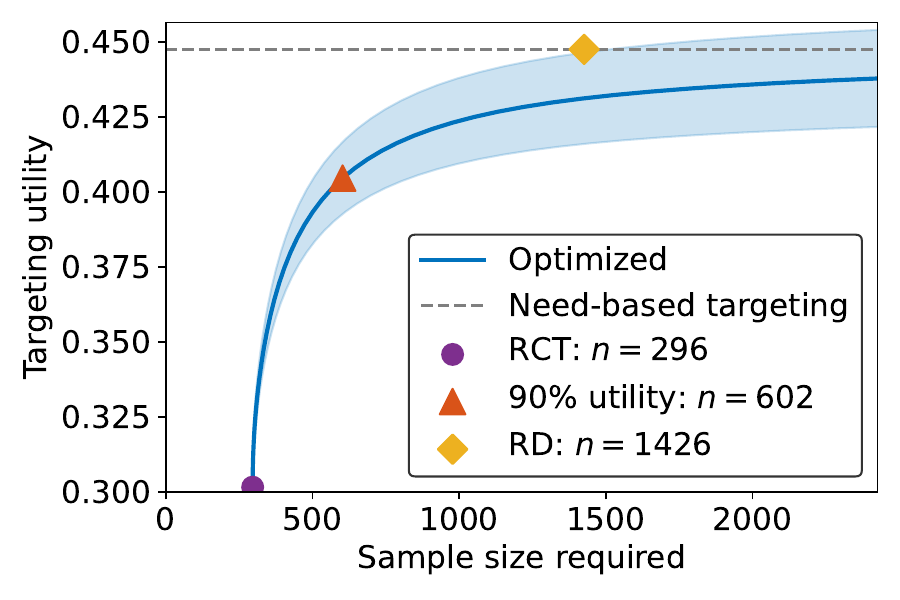}
        \includegraphics[width=2.5in]{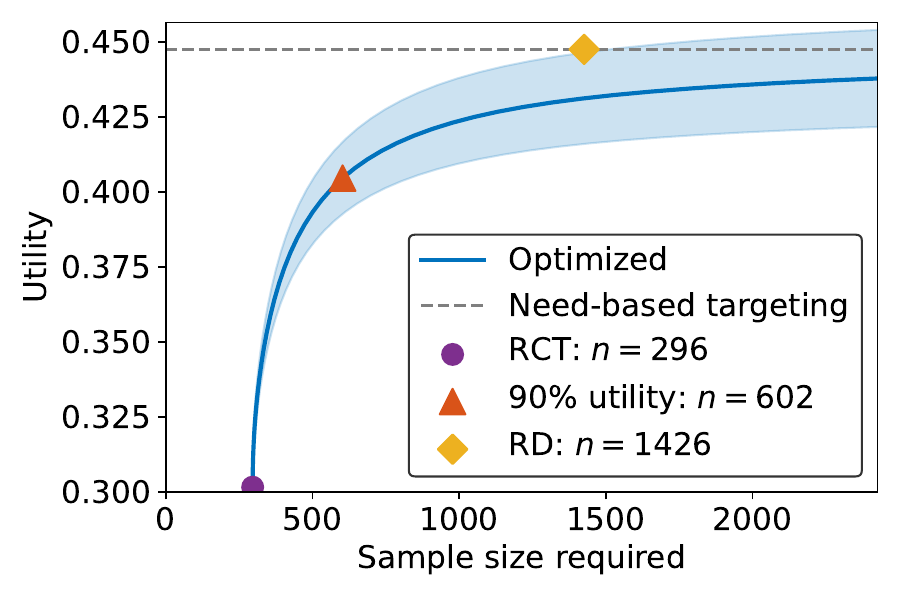}
        \includegraphics[width=2.5in]{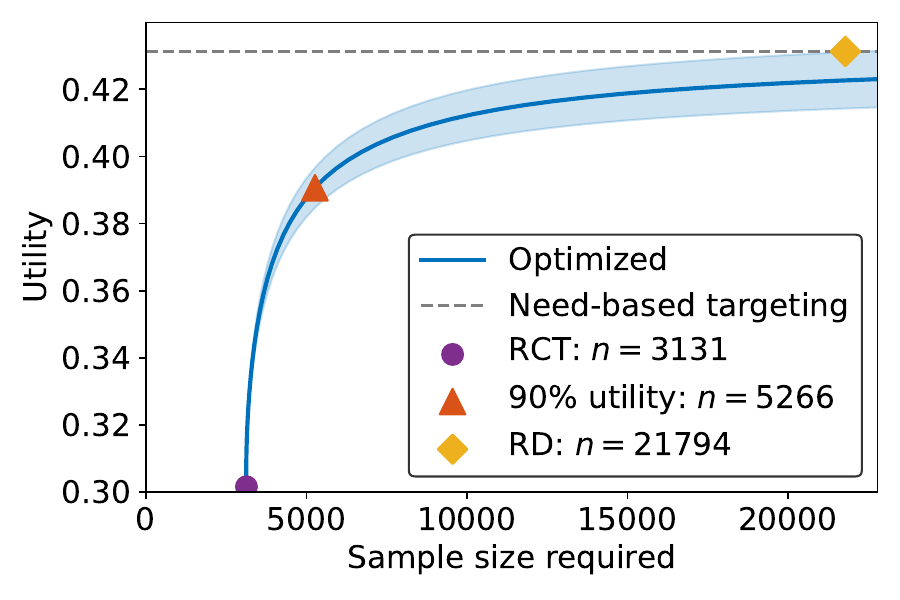}
        \includegraphics[width=2.5in]{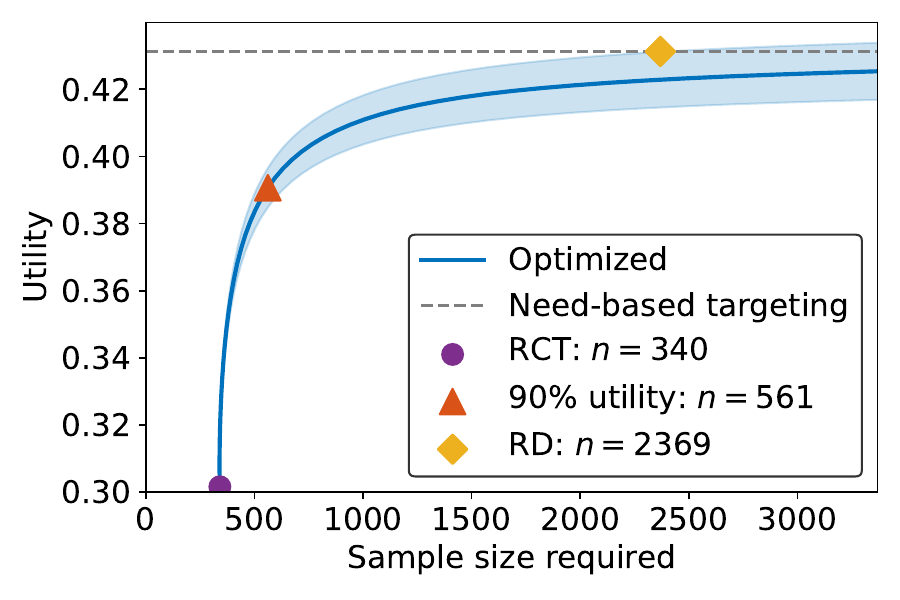}

    \caption{Utility-sample size tradeoff curves for $\beta = 0.05$ (left) and $\beta = 0.15$ (right). Top row: housing data. Bottom row: reentry data.}
    \label{fig:sensitivity-beta}
\end{figure}

\begin{figure}
    \centering
    \includegraphics[width=2.5in]{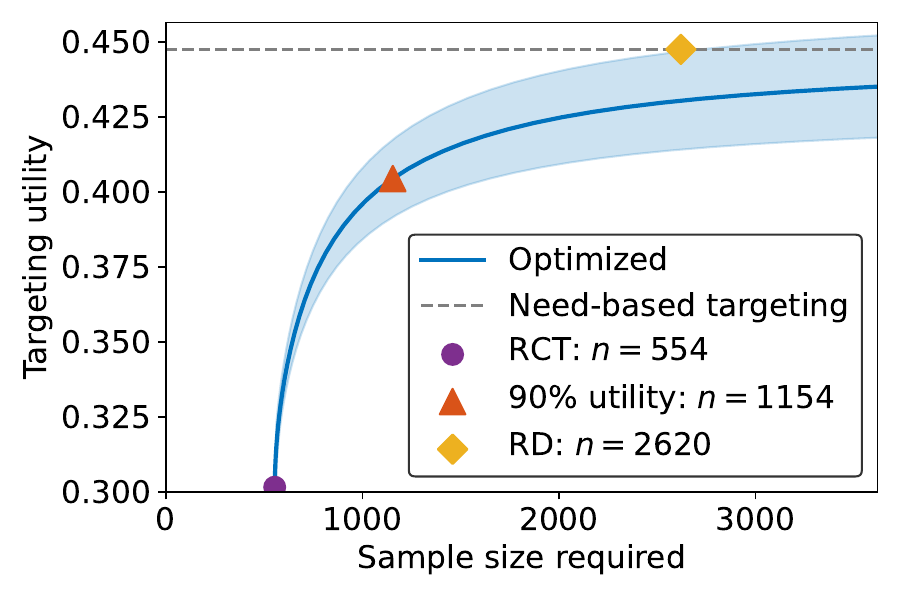}
    \includegraphics[width=2.5in]{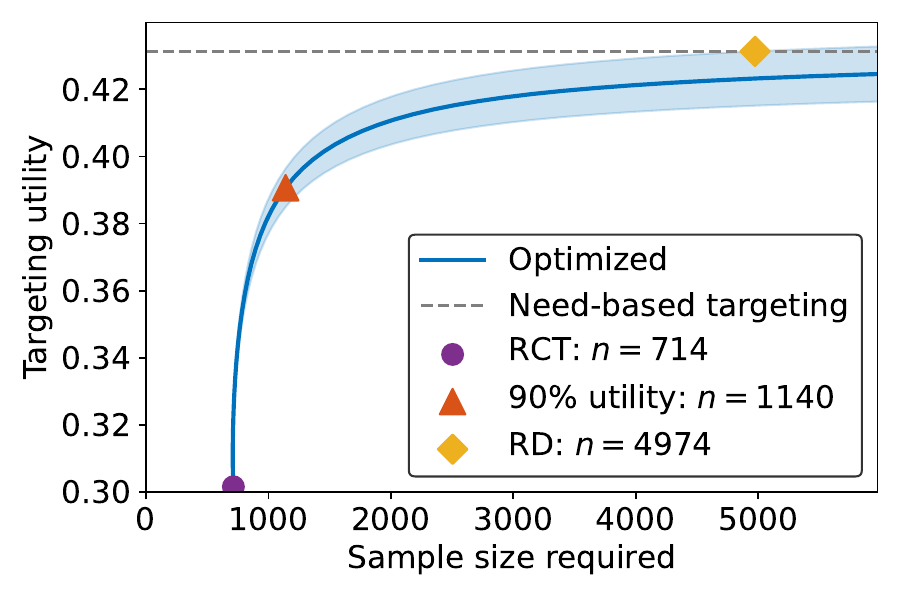}
    \includegraphics[width=2.5in]{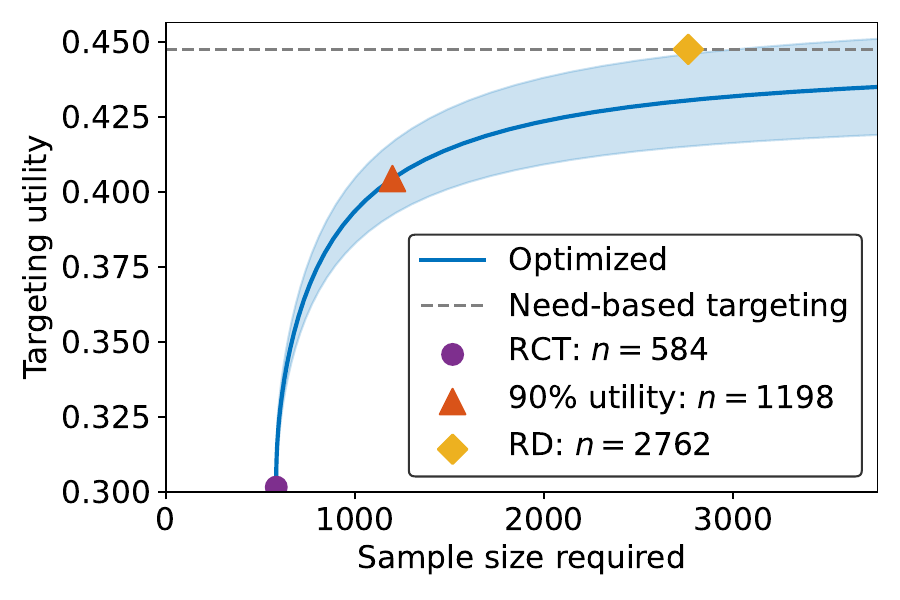}
    \includegraphics[width=2.5in]{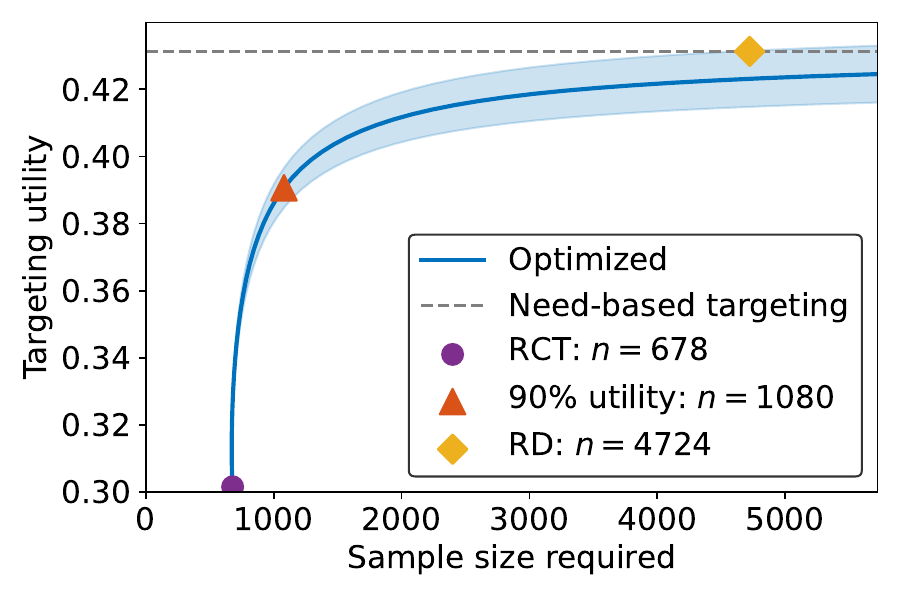}
    \caption{Utility-sample size tradeoffs under alternative treatment effect shapes. Top: U-shaped. Bottom: constant. Left: housing data. Right: reentry data.}
    \label{fig:te-models}
\end{figure}

\end{document}